\theoremstyle{plain}
\newtheorem{theorem}{Theorem}[section]
\newtheorem{lemma}[theorem]{Lemma}
\theoremstyle{definition}
\newtheorem{definition}[theorem]{Definition}
\newtheorem{assumption}[theorem]{Assumption}
\theoremstyle{remark}
\newtheorem{remark}[theorem]{Remark}
\newtheorem{example}[theorem]{Example}
\def\eqref#1{equation~\ref{#1}}
\def\1{\bm{1}}
\DeclareMathAlphabet{\mathsfit}{\encodingdefault}{\sfdefault}{m}{sl}
\SetMathAlphabet{\mathsfit}{bold}{\encodingdefault}{\sfdefault}{bx}{n}
\newcommand{\E}{\mathbb{E}}
\DeclareMathOperator*{\argmax}{arg\,max}
\DeclareMathOperator*{\argmin}{arg\,min}
\numberwithin{equation}{section}
\renewcommand{\d}{\mathrm{d}}
\renewcommand{\t}{\mathrm{t}}
\newcommand{\ff}{\mathrm{f}}
\newcommand{\s}{\mathrm{s}}
\newcommand {\dbE}[0]{\mathbb{E}}
\newcommand {\dbR}[0]{\mathbb{R}}
\newcommand {\caA}[0]{\mathcal{A}}
\newcommand {\caC}[0]{\mathcal{C}}
\newcommand {\caD}[0]{\mathcal{D}}
\newcommand {\caF}[0]{\mathcal{F}}
\newcommand {\caG}[0]{\mathcal{G}}
\newcommand {\caH}[0]{\mathcal{H}}
\newcommand {\caL}[0]{\mathcal{L}}
\newcommand {\caM}[0]{\mathcal{M}}
\newcommand {\caN}[0]{\mathcal{N}}
\newcommand {\caP}[0]{\mathcal{P}}
\newcommand {\caR}[0]{\mathcal{R}}
\newcommand {\caS}[0]{\mathcal{S}}
\newcommand {\caU}[0]{\mathcal{U}}
\newcommand {\caX}[0]{\mathcal{X}}
\newcommand {\caY}[0]{\mathcal{Y}}
\newcommand {\caZ}[0]{\mathcal{Z}}
\newcommand {\defeq}{\stackrel{\textup{\tiny def}}{=}}
\icmltitlerunning{The Sample Complexity of Online Strategic Decision Making with Information Asymmetry and Knowledge Transportability}
\begin{document}

\twocolumn[
\icmltitle{The Sample Complexity of Online Strategic Decision Making with Information Asymmetry and Knowledge Transportability}



\icmlsetsymbol{equal}{*}

\begin{icmlauthorlist}
\icmlauthor{Jiachen Hu}{pkucs}
\icmlauthor{Rui Ai}{mit}
\icmlauthor{Han Zhong}{pkudata}
\icmlauthor{Xiaoyu Chen}{pkuai}
\icmlauthor{Liwei Wang}{pkudata,pkuai}
\icmlauthor{Zhaoran Wang}{nwu}
\icmlauthor{Zhuoran Yang}{yale}
\end{icmlauthorlist}

\icmlaffiliation{pkucs}{School of Computer Science, Peking University}
\icmlaffiliation{pkudata}{Center for Data Science, Peking University}
\icmlaffiliation{mit}{Institute for Data, Systems, and Society, Massachusetts Institute of Technology}
\icmlaffiliation{pkuai}{National Key Laboratory of General Artificial Intelligence, School of Intelligence Science and Technology,
Peking University}
\icmlaffiliation{nwu}{Northwestern University}
\icmlaffiliation{yale}{Yale University}

\icmlcorrespondingauthor{Zhuoran Yang}{zhuoran.yang@yale.edu}

\icmlkeywords{Machine Learning, ICML}

\vskip 0.3in
]



\printAffiliationsAndNotice{}

\begin{abstract}
    Information asymmetry is a pervasive feature of multi-agent systems, especially evident in economics and social sciences. In these settings, agents tailor their actions based on private information to maximize their rewards. These strategic behaviors often introduce complexities due to confounding variables. Simultaneously, knowledge transportability poses another significant challenge, arising from the difficulties of conducting experiments in target environments. It requires transferring knowledge from environments where empirical data is more readily available. Against these backdrops, this paper explores a fundamental question in online learning: Can we employ non-i.i.d. actions to learn about confounders even when requiring knowledge transfer? We present a sample-efficient algorithm designed to accurately identify system dynamics under information asymmetry and to navigate the challenges of knowledge transfer effectively in reinforcement learning, framed within an online strategic interaction model. Our method provably achieves learning of an $\epsilon$-optimal policy with a tight sample complexity of $\Tilde{O}(1/\epsilon^2)$.
\end{abstract}
\section{Introduction}
\label{sec:intro}
Multi-agent systems are widely applied in reinforcement learning (RL)~\citep{littman1994markov, wang2019distributed, dubey2021provably}, economics \citep{brero2022learning}, social science \citep{sabater2005review}, and robotics \citep{yan2013survey}. In systems like these, agents are characterized by their diverse private information and their drive to maximize individual rewards, which give rise to what is termed information asymmetry \citep{myerson1982optimal, gan2022optimal}. Due to information asymmetry, the agents always strategically choose their actions according to the predetermined and publicly known policy of the principal, while keeping their private information concealed from the principal's awareness. Here, we say agents are ``strategical'' because they will choose their strategic actions based on personal types and aim at maximum obtainment. A famous example in economics is the generalized principal-agent problem \citep{myerson1982optimal}, where a principal interacts with several myopic agents with private types and type-based strategic bidding.

On the other hand, it's also common and sometimes necessary for the learner in machine learning to generalize the knowledge from one domain to other related domains, which is often referred to as transfer learning \citep{zhuang2020comprehensive}. Similar phenomena have also been observed in casual inference, which is known as knowledge transportability \citep{pearl2011transportability}. Knowledge transportability studies the problem of transferring information learned from experiments to different environments, where conducting active experiments is difficult or unfeasible but only passive observations are allowed. Social problems, particularly medical issues, frequently involve instances of knowledge transportability \citep{pearl2011transportability, bareinboim2013causal}. A motivating example is to transfer the learned effect of a treatment from one population (e.g., experiments done in New York City) with a large number of samples to other populations where massive experiments might be not available (e.g., investigating the outcome of that treatment in Los Angeles). 

In this paper, we hope to theoretically study online decision-making under information asymmetry arising from the agents' side and knowledge transportability needed by the principal. Inspired by the generalized principal-agent problem, we model a scenario in which a principal sequentially interacts with a series of agents. Each agent, driven by short-term objectives, myopically maximizes her rewards based on her private information~\citep{zhong2021can}. This model captures the dynamics of a principal engaging with a continuous flow of agents where their private types and actions directly affect the transition and the reward of the principal. The goal of the principal is to design a policy to maximize his total rewards when interacting prospectively with a specific target population of agents, though the online data may come from a different population of agents thus knowledge transfer is necessary. 
As a result, we formalize this problem as an online strategic interaction model--a generalization of the strategic Markov decision process (MDP) similar to~\citet{yu2022strategic}, which studies an offline setting. However, we must adaptively adjust our policy, i.e., time-varying, in an online learning setting to explore unknown environments. Therefore, our actions are not i.i.d. distributed, but are interdependent. Hence, the traditional concentration inequalities used in~\citet{yu2022strategic} do not apply to our case. This immediately raises a question: 
\begin{center}
    \textit{Is it possible to learn a model with confounders using non-i.i.d. actions and states?}
\end{center}

There are two main challenges in our model in order to learn the optimal policy of the principal. The first one is the presence of unobserved confounders--the agents' private types and private actions, which directly affect both the rewards and transition dynamics but are kept unobserved to the principal.  
The second is to transfer the learned information from online data to the target population of agents. It prompts a second question: 
\begin{center}
    \textit{Can we design an approximately optimal algorithm if the target distribution and source distribution are different, and how does this difference affect the sample complexity?}
\end{center}
Different from standard RL, the exploration needed here is more challenging because the heterogeneous agents can strategically manipulate their feedback to the principal, which affects the rewards and transitions.

We provide affirmative answers to both questions.  To resolve the challenges, we propose a model-based algorithm to learn a nearly optimal policy of the principal. Our algorithms leverage a novel nonparametric instrumental variable (NPIV) method sparked by \citet{angrist1995two,angrist2001instrumental,newey2003instrumental,ai2003efficient} with algorithmic instruments to establish causal identification of the system in the presence of confounders. With such identification, we're able to construct a high probability confidence set for the model or the value functions under the target domain to transfer knowledge~\citep{pan2009survey,taylor2009transfer} using the method of moments. Our analysis shows that such a model-based algorithm provably learns a $\epsilon$-optimal policy with only $\tilde O(1/\epsilon^2)$ samples which matches corresponding lower bounds up to logarithmic terms.

In summary, our contributions are two-fold: 
\begin{itemize}
    \item In order to theoretically study reinforcement learning with information asymmetry and knowledge transportability, we introduce the online strategic interaction model motivated by the strategic MDP \citep{yu2022strategic}. Our online strategic interaction model breaks the i.i.d. data condition and presents a scenario in which reinforcement learning models can be learned using time-varying data. Technically, we propose a model-based algorithm in the general MDP setting which leverages the NPIV method to do causal identification of the underlying model.
    \item We investigate the conditions under which our algorithm can provably learn a near-optimal policy. Using $\tilde{O}(1/\epsilon^2)$ samples, the proposed model-based algorithm learns an $\epsilon$-optimal policy and we show its dependency on the knowledge transportability. We define a knowledge transfer multiplicative term $C^{\ff}$ and use it as a measure of the hardness increased by transferring knowledge (cf. \Cref{sec:analysis}). 
\end{itemize}


\subsection{Related Work}
\label{sec:related_work}

Our work is closely connected to several bodies of literature discussed below.

\paragraph{RL in economics.}
Many models or problems extensively studied in economics have been combined with reinforcement learning, including Stackelberg game \citep{bacsar1998dynamic, zhong2021can}, Bayesian persuasion \citep{kamenica2011bayesian, gan2022bayesian, wu2022sequential}, mechanism design \citep{gan2022optimal, bernasconi2022sequential}, performative prediction \citep{mandal2022performative}, etc. Different from those models, our work is a natural extension of the generalized principal-agent problem to reinforcement learning focusing on learning the optimal policy of the principal in the presence of information asymmetry and knowledge transportability.

\paragraph{Efficient exploration in RL.}
The exploration problem has been extensively studied in tabular MDPs \citep{auer2008near, azar2017minimax, dann2017unifying, jin2018q, dann2019policy, zanette2019tighter, zhang2022horizon}, MDPs with linear function approximation \citep{yang2019sample, jin2020provably, yang2020reinforcement, cai2020provably, ayoub2020model, zhou2021nearly, hu2021near, chen2021near} and general function approximation \citep{russo2013eluder, jiang2017contextual, sun2019model, jin2021bellman, du2021bilinear}. Apart from standard RL models, the strategic interaction model poses additional challenges due to, for instance, the strategic behaviors of agents caused by information asymmetry. Since the principal cannot observe agents' types, partially observed feedbacks bring higher uncertainty and more nuisance in algorithm design.



\paragraph{RL with confounded Data.}
There is a line of works studying reinforcement learning in the presence of confounded data \citep{chen2021estimating, wang2021provably, liao2021instrumental, shi2021minimax, bennett2021off, bennett2021proximal, wang2022blessing, lu2022pessimism, yu2022strategic}. The studies of \citet{chen2021estimating, liao2021instrumental, yu2022strategic} also use the instrumental variables to do causal identification. Among them, \citet{yu2022strategic} is most related to this work, which studied strategic MDPs in the offline setting. In contrast, we study the online strategic interaction model together with knowledge transfer, a generalization of their model with a break of i.i.d. assumption, and distribution shift. We provide an elaborate summary of the novelty in this work beyond \citet{yu2022strategic} and standard RL in \Cref{sec:appendix_novelty_opme}.

\paragraph{The principal-agent problem and instrumental variable model.} The principal-agent problem is well-known in economics \citep{myerson1982optimal, guruganesh2021contracts,zhang2021automated,gan2022optimal}, which features the strategic interactions between a principal and an agent with private type and private action. The challenges in the principal-agent problem are known as ``moral hazard'' (incomplete information about actions) and ``adverse selection'' (incomplete information about types), which are both present in our work. The instrumental variable model has been extensively studied in economics \citep{angrist1995two, angrist2001instrumental, ai2003efficient, newey2003instrumental, blundell2007semi, chen2012estimation, chen2022well}, with applications in (statistical) machine learning \citep{harris2022strategic} and reinforcement learning \citep{chen2021estimating, liao2021instrumental, yu2022strategic}. As an application of the instrumental variable model, we use the nonparametric instrumental variable model to build conditional moment equations and methods of moments to identify the underlying model.

\section{Preliminaries}
\label{sec:prelim}
We use Markov decision processes to depict the standard online interaction models involved in this paper, which can be summarized as $(\caS, \caA, P, R, H, s_1)$. $\caS$ and $\caA$ denote the state space and action space, respectively. In the meanwhile, $P = \{P_h(\cdot \mid s, a)\}_{h=1}^H$ and $R = \{R_h(s, a)\}_{h=1}^H$ are the transition dynamics and reward functions, which are both unknown. $H$ is the time horizon while $s_1$ denotes the initial state.\footnote{We assume the initial state is a fixed state for simplicity. It's straightforward to extend it to the case where $s_1$ is sampled from a fixed initial distribution.}

For each episode, a player interacts with the model starting from state $s_1$ for $H$ steps. When the player reaches state $s_h$, he receives an observation indicating $s_h$. So, without loss of generality, we assume he can observe $s_h$ directly. Then he takes an action $a_h$ according to the past states and actions, receives a reward $r_h = R_h(s_h, a_h)$ and transits to the next state $s_{h+1} \sim P_h(\cdot \mid s_h, a_h)$. 

We consider the time-inhomogeneous Markov policy class $\Pi$ in this work. A policy $\pi = \{\pi_h\}_{h=1}^H \in \Pi$ maps each $s_h$ to $\Delta(\caA)$ at step $h$. 
The value function $V^{\pi}_h: \caS \to \dbR$ of a policy $\pi$ at step $h$ conditioned on $s_h$ is defined as $$V^{\pi}_h(s) \defeq \dbE_{\pi}[\sum_{t=h}^H r_t \mid s_h = s].$$ The goal is to learn an optimal policy $\pi^*$ with a highest cumulative rewards, \textit{viz}, $$\pi^* = \argmax_{\pi \in \Pi} V^{\pi}_1(s_1).$$

\section{The Online Strategic Interaction Model}
\label{sec:strategic_interaction_model}
The online strategic interaction model formalizes the sequential strategic interactions of a principal and $H$ agents, where each agent possesses a different private type sampled from a prior distribution.\footnote{It can also model the interaction of the principal and a single agent for $H$ steps, who has a different aspect of personal type drawn from a prior distribution at each step.} To elaborate further, at each step $h\in[H]=\{1,...,H\}$, an agent with private type $t_h$ arrives and chooses a strategic action $b_h$ that maximizes her own payoff. The principal needs to dynamically adjust his strategy to actively engage in exploration, with the goal of learning an approximately optimal policy as quickly as possible. This prior distribution represents the population from which the online data is collected, which may be different from the target population of agents, underscoring the necessity of knowledge transfer. We also call the prior distribution as the source distribution $\caP^{\mathrm{s}} = \{\caP^{\mathrm{s}}_h\}_{h=1}^H$. Starting at a fixed state $s_1$, the interactions happen as follows (see \Cref{fig:time} for a more intuitive understanding): 
 
\begin{itemize}
    \item For the $h$-th agent, the principal transits to state $s_h$ and then takes an action $a_h$ according to his strategy.
    \item The agent's private type $t_h$ is sampled from the unknown source population $\caP^{\mathrm{s}}_h$. Based on it, the myopic agent strategically takes an action $b_h = \argmax_b R^{\mathrm{a}}_h(s_h, a_h, t_h, b)$ to maximize her own reward $R^{\mathrm{a}}_h$. Note that $t_h$ and $b_h$ are both unobserved by the principal~\citep{maskin1984monopoly}.
    \item After the agent takes her action, the principal receives a manipulated feedback $e_h \sim \tilde{F}_h(\cdot \mid s_h, a_h, t_h, b_h)$ from the agent according to her private information. Denote $F_h(\cdot \mid s_h, a_h, t_h) \defeq \tilde{F}_h(\cdot \mid s_h, a_h, t_h, \argmax_b R^{\mathrm{a}}_h(s_h, a_h, t_h, b))$, then $e_h \sim F_h(\cdot \mid s_h, a_h, t_h)$.
    \item Finally, the principal receives a reward $r_h = R^*_h(s_h, a_h, e_h) + \xi_h$, where $R^*_h$ is an unknown 
    function and $\xi_h$ is assumed to be an unobserved endogenous zero-mean noise (i.e., confounded with the private type $t_h$ referring to \Cref{sec:appendix_detail_smdp}). The principal transits to the next state $s_{h+1} \sim P^*_h(\cdot \mid s_h, a_h, e_h)$ according to an unknown function $P^*_h$. Similarly, we could also allow the stochastic transition of $s_{h+1}$ to be endogenous or confounded with $t_h$. Then the principal starts the interaction with the next agent.
\end{itemize}
\begin{figure*}[ht]
    \centering
    \includegraphics[width=\linewidth]{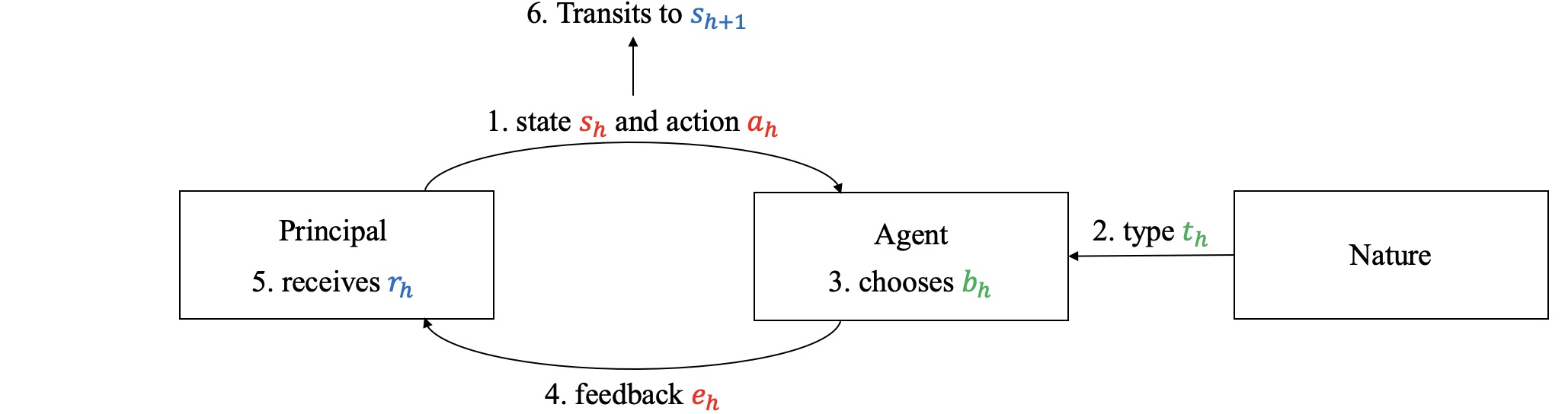}
    \caption{Timeline of the interaction. $r_h$ and $s_{h+1}$ (blue) are influenced by observable $s_h$, $a_h$, $e_h$ (red) and unobservable $t_h$ (green). We use numbers to indicate the sequence of events.}
    \label{fig:time}
\end{figure*}

\subsection{Motivation}
\label{sec:motivation_main}

Next, we discuss the motivation for designing the online strategic interaction model and use two real-world examples of the existence of confounders to digest this setting. More details can be found in Appendix \ref{sec:appendix_detail_smdp}.

As mentioned in the introduction, the main motivation to study the online strategic interaction model is to understand the strategy design of multi-agent strategic interaction (e.g., the generalized principal-agent model) in a sequential decision-making setting, where both problems are common in reality. To better align this model with the real world, we incorporate several generalizations including the \textbf{distribution shift} of the agent population, the \textbf{extensively large state space}, and the \textbf{endogenous noise} in our model. Each of these generalizations greatly enhances the expressive power of our model in the sense that
\begin{itemize}
    \item Without distribution shift (i.e., the knowledge transportability), the model reduces to single-agent RL, which is somehow not hard to compute near-optimal policies. The distribution shift condition initiates many challenges studied in the paper such as the confounding issue and the knowledge transfer. 
    \item A large state space is ubiquitous in complex real-world problems. Our analysis characterizes the impact of problem size growth on the sample complexity of our algorithms, for instance, through the Eluder dimension.
    \item Many practical applications have endogenous noises while existing work only assumes exogenous noise. In real-world problems, the principal is typically a company or an organization, and the agents are clients, employees, etc. The state $s_h$ is usually the conditions of the company, and the rewards are the profits of the company over an agent, which has a deep connection to the private type of the agents (e.g., the personality, the health conditions). Thus, the distribution of the rewards is largely correlated to the private type. If the noises are exogenous variables, this correlation will be realized solely by the feedback $e_h$. 
    That is to say, the private type must influence the reward distribution via an observable term $e_h$. However, the feedback $e_h$ is observed so that it only conveys very limited information of the private type. In reality, the noise term $\xi_h$ may include the unobserved confounder in the reward that is not measured even when intervening with the feedback $e_h$. Please see the next section for some examples for further explanations.
\end{itemize}

\begin{example}[Contract design]
    The shareholders of a company aim to maximize the stock price $s_h$ and related returns. At each step $h$, the company decides whether to replace the CEO and what her compensation should be denoted by $a_h$. CEOs can be either diligent or lazy, namely, $t_h\in\{\text{Diligent},\text{Lazy}\}$. Different types of CEOs will choose to exert different levels of effort $b_h$, and whether a CEO is hardworking affects some hidden factors, e.g., the morale of the company's employees. Shareholders cannot observe the CEO's effort directly; they can only observe the operational status of the company $e_h$. At time $h$, the shareholders' returns are influenced by several factors: the stock price and the operational status of the company which determine dividends, the CEO's salary, and employee morale (which may affect turnover rates). The first two terms constitute $R_h^*(s_h,a_h,e_h)$ while the last unobserved term forms $\xi_h$. This explains the source of confounders and ends up with reward $r_h=R^*_h(s_h, a_h, e_h) + \xi_h$. The market can observe the company's stock price, financial statements, and operational condition at moment $h$, and these observations influence the stock price at $h+1$, saying $s_{h+1}\sim P_h^*(\cdot\mid s_h,a_h,e_h)$ through self-fulfilling expectation~\citep{hamilton1985observable}.
\end{example}

Intuitively, this model enables agents to strategically manipulate rewards of the principal and state transitions via their private information through their feedback $e_h$, which is more challenging than classic RL. The introduction of endogenous variables greatly enhances the generality of our model but also raises more challenges to designing efficient algorithms. We provide more details of the model and the motivation, as well as more motivating examples in \Cref{sec:appendix_detail_smdp}. We use $\caM^*(\caP^{\s})$ to denote the online strategic interaction model under the source distribution and let $\caP^{\mathrm{t}} = \{\caP^{\mathrm{t}}_h\}_{h=1}^H$ be the target population of agents. We close off this section by providing an application scenario of knowledge transfer.
\begin{example}[Experimental design.] 
With the rapid development of Large Language Models (LLMs), integrating LLMs into experimental design~\citep{kumar2023impact}, such as A/B testing, has become a new trend. Compared to the high costs associated with using human samples for experiments, the cost of using LLMs is essentially zero. However, LLMs exhibit significant differences from humans in many aspects. For example, in addressing an optimization problem, LLMs operate with complete rationality, whereas humans sometimes possess only bounded rationality~\citep{simon1990bounded,conlisk1996bounded}. Therefore, experiment designers need to combine known human characteristics say $\caP^{\mathrm{t}}$ with experimental data derived from LLM features say $\caP^{\mathrm{s}}$ to develop optimal mechanisms tailored for human agents.
\end{example}

\subsection{Planning in the Online Strategic Interaction Model}
\label{sec:plan_smdp}

Since we are trying to design model-based algorithms, we introduce the planning algorithm in the online strategic interaction model in this section. If we are given the underlying rewards $\{R^*_h\}_{h=1}^H$ and transitions $\{P^*_h\}_{h=1}^H$, we can construct an aggregated model with respect to a target  type distribution $\caP^{\mathrm{t}}$, i.e., assuming 
the random type is  sampled from $\caP^{\mathrm{t}}$. 

Define the aggregated model $\bar{\caM}^* = (\caS, \caA, \bar{R}^*, \bar{P}^*, H, s_1)$ under the target distribution as
\begin{align}
\label{equ:aggregate_reward}
\bar{R}^*_h(s_h, a_h) & \defeq \dbE_{t \sim \caP^{\mathrm{t}}_h, e \sim F_h(\cdot \mid s_h, a_h, t)} \left[R^*_h(s_h, a_h, e)\right], 
\end{align}
\begin{align}
\label{equ:aggregate_transition}
\bar{P}^*_h(\cdot \mid s_h, a_h) & \defeq \dbE_{t \sim \caP^{\mathrm{t}}_h, e \sim F_h(\cdot \mid s_h, a_h, t)} \left[P^*_h(\cdot \mid s_h, a_h, e)\right].
\end{align}
It is known that the reward and transition dynamics in the target model $\caM^*(\caP^{\t})$ is equivalent to $\bar{\caM}^*$. We provide a brief explanation here and defer the details to \Cref{sec:appendix_detail_smdp}.

Given any $(s_h, a_h, e_h)$ in $\caM^*(\caP^{\t})$, we know $\xi_h$ is an endogenous noise which may be confounded with $t_h$. This means
\begin{align}
\label{eqn:confounding_term_explanation}
\dbE\left[\xi_h \mid s_h, a_h, e_h\right] \neq 0,
\end{align}
since $e_h$ also correlates with $t_h$. Nevertheless, we note that $\xi_h$ is independent of $(s_h, a_h)$ in our model as apart from what absorbed in $R_h^*(s_h,a_h,e_h)$, the rest only depends on the unobservable type $t_h$, which implies that 
\[
\dbE_{t_h \sim \caP^{\t}_h, e_h \sim F_h(\cdot \mid s_h, a_h, t_h)}\left[\xi_h \mid s_h, a_h\right] = 0,
\]
because $\xi_h$ is a zero-mean noise. Therefore, we conclude that 
\begin{align*}
    \dbE_{\caM^*(\caP^{\t})}\left[r_h \mid s_h, a_h\right] &\defeq \dbE_{t_h \sim \caP^{\t}_h, e_h \sim F_h(\cdot \mid s_h, a_h, t_h)}\left[r_h \mid s_h, a_h\right] \\
    &= \bar{R}^*_h(s_h, a_h),
\end{align*}

and analogously the transition of $\caM^*(\caP^{\t})$ is also identical to $\bar{P}^*_h(\cdot \mid s_h, a_h)$.

Now it suffices to learn the optimal policy $\bar{\pi}^*$ of $\bar{\caM}^*$. Since the aggregated model is an episodic MDP, its optimal policy is a Markov policy, which justifies the previous definition of $\Pi$. Denote the value function of any policy $\pi$ on $\bar{\caM}^*$ as $\bar{V}^{\pi}_{\bar{\caM}^*}$, we say a policy $\pi$ is $\epsilon$-optimal if 
\[\bar{V}^{\pi}_{\bar{\caM}^*} \geq \bar{V}^{\bar{\pi}^*}_{\bar{\caM}^*} - \epsilon.
\]

\subsection{Notation Guide}
\label{sec:notation}

We use $d^{\s, \pi}_{h}(\cdot, \cdot, \cdot)$ (resp. $d^{\t, \pi}_{h}(\cdot, \cdot, \cdot)$) to denote the joint distribution of $s_h, a_h, e_h$ for policy $\pi$ under model $\caM^*(\caP^{\s})$ (resp. $\caM^*(\caP^{\t})$). Sometimes we also marginalize over $e_h$ and use $d^{\s, \pi}_{h}(\cdot, \cdot)$ (resp. $d^{\t, \pi}_{h}(\cdot, \cdot)$) to denote the marginal distribution of $s_h, a_h$ on $\caM^*(\caP^{\s})$ (resp. $\caM^*(\caP^{\t})$). For any value function $g_{h+1}: \caS \to \dbR$ and transition dynamics $P_h(\cdot \mid s_h, a_h, e_h)$, we use $P_h g_{h+1} (s_h, a_h, e_h) \defeq \sum_{s'} g_{h+1}(s') P_h(s' \mid s_h, a_h, e_h) $ to denote the expectation of $g_{h+1}$ under $P_h$. For any function space $\caX$ with function $x^* \in \caX$, define the translated space $\caX - x^* \defeq \{x - x^*: x \in \caX\}$. A comprehensive table of notation, covering both the main body and the appendix, is provided in \Cref{appendix:notation}.

\section{Meta Algorithm and Methodology}
\label{sec:main_result}

We're ready to present our algorithms to learn the optimal policy in the online strategic interaction model. We first introduce a meta-algorithm (i.e., Algorithm \ref{alg:meta}), then come to its model-based variant tailored for our setting.

\begin{algorithm}[tb]
   \caption{Meta Algorithm}
   \label{alg:meta}
\begin{algorithmic}[1]
   \STATE {\bfseries Input:} hypothesis class $\caH$, confidence level $\beta$, total episodes $K$, auxiliary function class $\caU$.
   \STATE Initialize the exploration policy $\pi^1$ as a uniform policy.
   \FOR {$k = 1, 2, ..., K$} 
   \STATE Establish dataset $\caD^k_h$ with policy $\pi^k$ for $h \in [H]$.
   \STATE Compute empirical risk function $\caL^k_h(c_h)$ for all $c_h \in \caH_h$ with auxiliary function class $\caU$.
   \label{alg:risk_estimation}
   \STATE Construct high probability confidence set $\bar{\caC}^k = \{c = (c_1, ..., c_H) \in \caH: \caL^k_h(c_h) \leq \beta, \forall h \in [H]\}$.
   \STATE Compute $c^{k+1} = \argmax_{c \in \bar{\caC}^k} V^*_{c}, \pi^{k+1} = \pi_{c^{k+1}}$.
   \ENDFOR
   \STATE {\bfseries Output:} $\pi^i$ for $i \sim \mathrm{Unif}([K])$.
\end{algorithmic}
\end{algorithm}

Suppose we have access to a hypothesis class $\caH = \{\caH_h\}_{h=1}^H$ that realizes the underlying model or equally value functions, we follow the optimism principle in the face of uncertainty to perform efficient exploration by selecting the optimistic hypothesis $c^{k+1}$ and its related policy $\pi_{c^{k+1}}$ (e.g., the optimal policy of $c^{k+1}$ as if $c^{k+1}$ is the model). The high probability confidence set is constructed by the empirical risk function with the help of an auxiliary function class $\caU$ (see Line \ref{alg:risk_estimation} in Algorithm \ref{alg:meta}) according to the casual identification established through our NPIV method (see \Cref{sec:result_smdp}). 

\subsection{Detailed Explanation of Methodology}
\label{sec:result_smdp}
Recall that we face two coupled challenges aiming at learning the optimal policy. 

First, we need to \textbf{bypass the confounding issues}. Since $\xi_h$ which depends on unobserved type $t_h$ appears in the reward function, the correlation between it and feedback $e_h$ invalidates traditional direct reinforcement learning methods, calling for instrumental variable method. As a simple observation, we know that the principal's state-action pair serves as a valid instrument. Intuitively, $ (s_h, a_h)$ affects the agent's reward both directly and indirectly only by affecting the feedback $e_h$ generated from the agent's best response behavior.

Second, we need to \textbf{explore unknown environments}. Given the exploration requires accurate estimates of rewards and transitions, these two challenges are coupled and we propose a corresponding optimism principle generalizing the application of optimistic planning.
 We visualize the causal graph and our underlying intuition on instrumental variables in~\Cref{fig:causal_graph}.

\begin{figure}[tb]
    \centering
    \includegraphics[width=\linewidth]{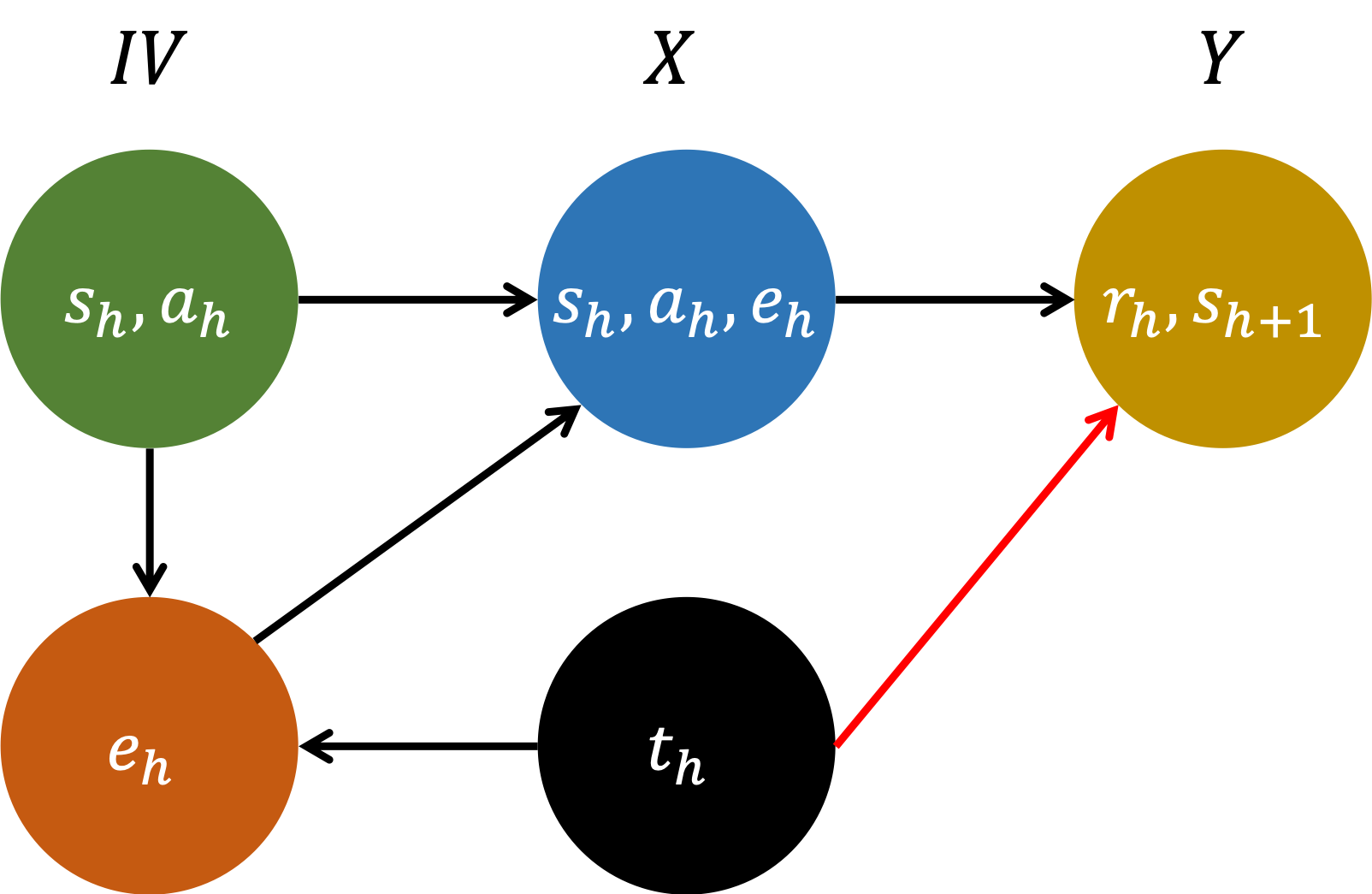}
    \caption{The causal graph for the strategic interaction between the principal and the $h$-th agent. The red line denotes the confounding between $(r_h, s_{h+1})$ and $t_h$. ''IV" means instrumental variables.}
    \label{fig:causal_graph}
\end{figure}

\begin{algorithm}[htb]
   \caption{Optimistic Planning with Minimax Estimation-Dynamical/General (OPME-D/G)}
   \label{alg:opme}
\begin{algorithmic}[1]
   \STATE {\bfseries Input:} model class $\caR, \caP$, confidence level $\beta$ (see \Cref{eqn:confidence_level_opme}), number of episodes $K$, discriminator class $\caF$, discriminator class $\caG$ (used only in OPME-G).
   \STATE Initialize the dataset $\caD_h = \emptyset$ for all $h \in [H]$.
   \STATE Initialize the exploration policy $\pi^1$ as a uniform policy.
   \FOR {$k = 1, 2, ..., K$} 
   \STATE Roll out $\pi^k$ and collect $\tau^k = \{(s^k_h, a^k_h, e^k_h, r^k_h)\}_{h=1}^H$.
   \FOR {$h = 1, 2, ..., H$}
   \STATE Add the sample $(s^k_h, a^k_h, e^k_h, r^k_h, s^k_{h+1})$ to   $\caD_h$.
   \STATE Define the empirical risk function $\hat{L}^k_h(\cdot)$ of rewards $R_h \in \caR_h$ by \Cref{eqn:minimax_estimation_dynamical_empirical}.
   \STATE Compute confidence set $\caR_h^k$ by \Cref{eqn:confidence_set_reward}.
   \STATE Define the empirical risk function $\hat{L}^{k}_h(\cdot)$ of transition dynamics $\bm{G}_{h} \in \caP_{h}$ by  \Cref{eqn:minimax_estimation_dynamical_empirical_transition} for OPME-D, or $P_h \in \caP_h$ by \Cref{eqn:minimax_estimation_general} for OPME-G.
   \STATE Compute confidence set $\caP_h^k$ by \Cref{eqn:confidence_set_dynamical_transition} for OPME-D, or by \Cref{eqn:confidence_set_general} for OPME-G.
  \ENDFOR
  \STATE Construct model class $\bar{\caC}^k$ by \Cref{eqn:confidence_set_model_dynamical} for OPME-D, or by \Cref{eqn:confidence_set_model_G} for OPME-G.
  \STATE Set $\bar{\caM}^{k+1} = \argmax_{\bar{\caM} \in \bar{\caC}^k} V^{\pi}_{\bar{\caM}}(s_1)$ and $\pi^{k+1} = \pi^*_{\bar{\caM}^{k+1}}$.
  \ENDFOR
  \STATE {\bfseries Output:} $\pi^i$ for $i \sim \mathrm{Unif}([K])$.
\end{algorithmic}
\end{algorithm}

We propose a model-based variant of the meta-algorithm, i.e., OPME-G in~\Cref{alg:opme}, in our setup, where the hypothesis class $\caH_h$ is initialized to the model class $\caH_h = (\caR_h, \caP_h)$, where $\caR_h = \{R_h: R_h(s_h, a_h, e_h) \in \dbR\}$ and $\caP_h = \{P_h: P_h(\cdot \mid s_h, a_h, e_h) \in \Delta(\caS)\}$. Please refer to \Cref{appendix:complete_smdp,sec:appendix_smdp} for details on hyperparameter selection and additional information on another variant OPME-D. 

It is challenging to estimate $R^*_h$ and $P^*_h$ from the model class because of the confounding issue. We take $R^*_h$ as an example to explain this issue. Suppose the principal receives a sampled reward $r_h$ at state $s_h$ after taking action $a_h$ and observing feedback $e_h$, one may hope $\dbE[r_h \mid s_h, a_h, e_h] = R^*_h(s_h, a_h, e_h)$ so as to estimate $R^*_h$ by standard least square regression. However, this is not the case with the existence of confounders.

Given the endogenous noise $\xi_h$ in $r_h = R^*_h(s_h, a_h, e_h) + \xi_h$, \Cref{eqn:confounding_term_explanation} implies that 
\begin{align*}
\dbE_{\caM^*(\caP^{\s})}\left[r_h \mid s_h, a_h, e_h\right] \neq R^*_h(s_h, a_h, e_h).
\end{align*}
Here $\caM^*(\caP^{\s})$ is the model under source distribution (cf. \Cref{sec:strategic_interaction_model}). Motivated by \citet{yu2022strategic,angrist1995two, angrist2001instrumental, chen2022well}, we turn to tailoring the NPIV method and finally resolve this issue.

\paragraph{Nonparametric instrumental variable model.} 
Since $\xi_h$ is a zero-mean noise, we can use $(s_h, a_h)$ as an instrumental variable for $(s_h, a_h, e_h)$ and $r_h$ in the sense that (cf. \Cref{sec:appendix_npiv})
\begin{align}
\label{eqn:momenteqn_reward}
& \dbE_{\caM^*(\caP^{\s})}\left[r_h - R^*_h(s_h, a_h, e_h)\mid s_h, a_h\right] = 0.
\end{align}

Given any roll-in policy $\pi$, the conditional moment equation (i.e., \Cref{eqn:momenteqn_reward}) gives 
\begin{align*}
\dbE_{s_h, a_h \sim d^{\s, \pi}_h}\big[\dbE_{e_h}\left[r_h - R^*_h(s_h, a_h, e_h) \mid s_h, a_h\right]\big] = 0,
\end{align*}
which inspires us to estimate $R^*_h$ via the least-square loss in episode $k$, that is,
\begin{align}
\label{eqn:conditional_ls_regression}
\argmin_{R_h} \sum_{\tau=1}^k
\underset{s_h, a_h \sim d^{\s, \pi^\tau}_h}{\dbE}
[(\underset{e_h}{\dbE}\left[r_h - R_h(s_h, a_h, e_h) \mid s_h, a_h\right])^2].
\end{align}
However, this conditional least-square regression cannot be directly estimated due to the conditional expectation inside the square. 
Instead, we use the minimax estimation with a discriminator function class $\caF_h: \caS \times \caA \to \dbR$ in place of \Cref{eqn:conditional_ls_regression} by Fenchel-Rockafellar duality \citep{dai2018sbeed, nachum2020reinforcement} and we define the risk function $L^k_h$ as
\begin{align}
\label{eqn:minimax_estimation_dynamical_population}
L_h^k(R_h) \defeq \max_{f_h \in \caF_h} l^k_h(R_h, f_h) - \frac{1}{2} \sum_{\tau=1}^k \dbE_{s, a \sim d^{\s, \pi^\tau}_h}[f^2_h(s, a)],
\end{align}
where
$
l_h^k(R, f) \defeq \sum_{\tau=1}^k \dbE_{s_h, a_h,e_h \sim d^{\s, \pi^\tau}_h}[f(s_h, a_h) (R(s_h,\\ a_h, e_h) - r_h)].
$

In each episode $k$, we roll out a trajectory $\tau^k = (s^k_1, a^k_1, e^k_1, r^k_1, ..., s^k_H, a^k_H, e^k_H, r^k_H)$ by $\pi^k$ and construct a dataset $\caD^k_h = \{\tau^1, \tau^2, ..., \tau^k\}$. Then we can 
construct an empirical version of $L_h^k$, say 
\begin{align}
\label{eqn:minimax_estimation_dynamical_empirical}
\hat{L}^{k}_h(R_h) &\defeq \max_{f_h \in \caF_h} \hat{l}^k_h(R_h, f_h) - \frac{1}{2}\sum_{\tau=1}^k f^2_h(s_h^\tau, a_h^\tau),\ \hat{l}^k_h(R, f) \nonumber\\
&\defeq \sum_{\tau=1}^k f(s_h^\tau, a_h^\tau) \left(R(s_h^\tau, a_h^\tau, e_h^\tau) - r_h^\tau\right). 
\end{align}
Note that although the minimax estimation is motivated by~\citet{yu2022strategic} which has access to i.i.d. samples for estimation, we can only process a non-i.i.d. but sequentially constructed dataset $\caD^k_h$. It turns out that we need to resort to a different and more intricate fast martingale concentration analysis with a new construction of confidence set (cf. \Cref{eqn:confidence_set_reward,eqn:confidence_set_general}).

The issue similarly affects the estimation of the transition function $P^*_h$. Given $s_h, a_h$ and $e_h$, the confounding between the random variable $s_{h+1}$ and $e_h$ impedes standard model-based estimation methods~\citep{agarwal2020flambe, modi2021model, liu2022partially}. Nonetheless, the approach of value target regression~\citep{ayoub2020model, zhou2021nearly} inspires us and has been proven beneficial in addressing this confounding issue with meticulously tailored. We utilize an additional discriminator class $\mathcal{G}$, which is designed to encapsulate the optimal value functions of all candidate models. Specifically, for any $g_{h+1}$ within $\mathcal{G}_{h+1}$ and for any observed $s_h, a_h$, it holds that 
\begin{align}
\label{eqn:momenteqn_general}
\dbE_{\caM^*(\caP^{\s})}\left[g_{h+1}(s_{h+1}) - P^*_h g_{h+1}(s_h, a_h, e_h) \mid s_h, a_h\right] = 0,
\end{align}
thereby aiding in the effective resolution of confounding variables. This innovative approach allows for more accurate estimations of the transition dynamics by leveraging the capabilities of $\mathcal{G}$ to discriminate among the potential influences of confounders. Note that the conditional moment equation has the same form as the previous one, namely, \Cref{eqn:momenteqn_reward}. Therefore, we can similarly define the empirical risk function of $P_h$ as 
\begin{align}
\label{eqn:minimax_estimation_general}
\hat{L}^{k}_h(P_h) &\defeq \max_{g_{h+1} \in \caG_{h+1}} \hat{l}^k_h(P_h, g_{h+1}),\\ \hat{l}^k_h(P_h, g_{h+1})\nonumber 
&\defeq \max_{f_h \in \caF_h} \hat{l}^k_h(P_h, g_{h+1}, f_h) - \frac{1}{2}\sum_{t=\tau}^k f^2_h(s_h^\tau, a_h^\tau),
\end{align}
where $\hat{l}^k_h(P, g, f) \defeq \sum_{\tau=1}^k f(s_h^\tau, a_h^\tau) (P g(s_h^\tau, a_h^\tau, e_h^\tau) - \\g(s_{h+1}^\tau))$.

\paragraph{Knowledge transfer and optimistic planning.}
We can estimate $R^*, P^*$ with consistent causal identification discussed above. Leveraging the idea of unsupervised domain adaptation \citep{ganin2016domain}, we then use such estimators to guide the exploration process given the target population by optimistic planning.

To be more specific, we construct the high probability confidence set in episode $k$ as 
\begin{align}
\label{eqn:confidence_set_reward}
& \caR^k_h \defeq \left\{R_h \in \caR_h: \hat{L}^{k}_h(R_h) \leq \beta_1 \right\}, 
\end{align}
\begin{align}
\label{eqn:confidence_set_general}
& \caP^k_h \defeq \left\{P_h \in \caP_h: \hat{L}^{k}_h(P_h) \leq \beta_2 \right\}.
\end{align}
The confidence level $\beta_1, \beta_2$ are defined in \Cref{eqn:confidence_level_opme}. For all $h$ and any $R_h \in \caR^k_h$ and $P_h \in \caP^k_h$, an aggregated model $\bar{\caM}(R, P)$ under the target distribution $\caP^{\t}$ with $R=\{R_h\}_{h=1}^H$,$P=\{P_h\}_{h=1}^H$ is defined by \Cref{equ:aggregate_reward,equ:aggregate_transition}, which enables us to transfer the learned causal knowledge from the source population to the target population. Hence, we define the confidence set on aggregated models as 
\begin{align}
\label{eqn:confidence_set_model_G}
\bar{\caC}^k \defeq \left\{\bar{\caM}(R, P): R_h \in \caR^k_h, P_h \in \caP^k_h, \forall h \in [H]\right\}.
\end{align}
Finally, we choose the exploration policy $\pi^{k+1}$ for episode $k+1$ as the optimal policy of the model with the highest estimated cumulative reward in $\bar{\caC}^k$. Simply put, use $\caP^{\s}$ for estimation, use $\caP^\t$ for exploration!

\section{Theoretical Results and Analysis}
\label{sec:analysis}

In this section, we outline the assumptions made regarding the function classes and detail the sample complexity of our algorithms.

\subsection{Necessary Assumptions}
\paragraph{Realizability.} First of all, we require all the function classes to be realizable.

\begin{assumption}[Realizability]
\label{assump:realizability}
There exists a constant $B$ such that for any $h \in [H]$,
\begin{itemize}
    \item $R^*_h \in \caR_h, P^*_h \in \caP_h$.
    \item The auxiliary function class $\caF_h: \caS \times \caA \to \dbR$ with range $[-B, B]$ captures all required functions, say $ \dbE_{t \sim \caP^{\s}_h, e \sim F_h(\cdot \mid s_h, a_h, t)}\left[\nu_h(s_h, a_h, e)\right] \in \caF_h $
for any $\nu_h \in \{\caR_h - R^*_h, (\caP_h - P^*_h)\caG_{h+1}\}$.\footnote{Recall that $(\caP_h - P^*_h)\caG_{h+1} = \{P_hg_{h+1} - P^*_h g_{h+1}: P_h \in \caP_h, g_{h+1} \in \caG_{h+1}\}$.}
    \item The auxiliary function class $\caG_h: \caS \to [-B, B]$ captures all optimal value functions, say $\bar{V}^*_{\bar{\caM}(R,P), h} \in \caG_h$ of the aggregated model $\bar{\caM}(R,P)$ for any $R \in \caR, P \in \caP$.
\end{itemize}
\end{assumption}

Realizability is necessary since the identification is performed under general nonparametric function classes. For example, the second assumption requires discriminator function class $\caF_h$ can capture the projection of $\nu_h$ from the input space to the instrumental variable space in order to bound the error of corresponding minimax estimation.

\paragraph{Ill-posedness measure.} 
We overcome the confounding issue through the NPIV model to build a consistent estimator of the model via conditional moment equations (i.e., \Cref{eqn:momenteqn_reward,eqn:momenteqn_general}). Taking the reward functions as an example, these equations enable us to bound the projected mean square error (MSE) of $R_h$, namely, 
\begin{align*}
\dbE_{s_h, a_h \sim d^{\s, \pi}_h}\left[\dbE_{e_h}\left[\left(R_h - R^*_h\right)(s_h, a_h, e_h) \mid s_h, a_h \right]^2\right],
\end{align*}
while the sample complexity is associated with the MSE
\begin{align*}
\dbE_{s_h, a_h,e_h \sim d^{\s, \pi}_h}\left[\left(R_h(s_h, a_h, e_h) - R^*_h(s_h, a_h, e_h)\right)^2\right].
\end{align*}

Bridging the gap between projected MSE and MSE leads to an ill-posed inverse problem. People quantify the difficulty of this kind of problems via the following ill-posedness measure.
\begin{definition}
\label{def:ill_posedness}
Define the ill-posedness measure $\tau_h$ as 
\begin{align*}
\tau_h \defeq \max_{\nu_h, \pi} \frac{\dbE_{s_h, a_h,e_h \sim d^{\s, \pi}_h}\left[\nu^2_h(s_h, a_h, e_h)\right]}{\dbE_{s_h, a_h \sim d^{\s, \pi}_h}\left[\dbE_{e_h}\left[\nu_h(s_h, a_h, e_h) \mid s_h, a_h\right]^2\right]},
\end{align*}
where the maximum of $\nu_h$ is taken over $\nu_h \in \{\caR_h - R^*_h, (\caP_h - P^*_h)\caG_{h+1}\}$.
\end{definition}

The ill-posedness measure is widely used in economics to quantify estimation errors of related instrumental variable models~\citep{dikkala2020minimax, chen2022well, yu2022strategic}. It illustrates how confounders increase the difficulty of the learning problem and are inevitable in sample complexity.

\paragraph{Knowledge transfer error.}

Apparently, the knowledge transfer is infeasible when the target population drastically differs from the source population where the online data is collected \citep{pearl2011transportability}. We quantify the error caused by transferring knowledge as follows.

\begin{definition}
\label{def:tranport_error}
Define the knowledge transfer multiplicative term $C^{\ff}_h$ as
\begin{align*}
C^{\ff}_h \defeq \max_{\nu_h, \pi} \frac{\dbE_{s_h, a_h,e_h \sim d^{\t, \pi}_h}\left[\nu^2_h(s_h, a_h, e_h)\right]}{\dbE_{s_h, a_h,e_h \sim d^{\s, \pi}_h}\left[\nu^2_h(s_h, a_h, e_h)\right]},\end{align*}
where $\nu_h \in \{\caR_h - R^*_h, (\caP_h - P^*_h)\caG_{h+1}\} $.
\end{definition}
The multiplicative term $C^{\ff}_h$ quantifies the distributional shift when transferring estimators computed from online data to the target population. In the literature, it is often referred to as ``Concentrability''~\citep{foster2021offline}. Concentrability is a simple but fairly strong notion of coverage, which requires that the distribution of our data collection evenly covers the target distribution we are interested in. It captures the effort needed to bypass the knowledge transfer.

\subsection{Sample Complexity Results}
We now introduce the sample complexity of our algorithms.

\begin{theorem}
\label{thm:sc_smdp}
Under Assumption \ref{assump:realizability}, with probability at least $1 - \delta$, the sample complexity of the model-based algorithm to learn an $\epsilon$-optimal policy is bounded by 
\begin{align*}
\tilde{O}\left( \sum_{h=1}^H B^2 d_{\mathrm{V}, h} \tau_h C^{\ff}_h \log (|\caR \times \caP \times \caG \times \caF|/\delta) \cdot \epsilon^{-2}\right).
\end{align*}
Here $d_{\mathrm{V}, h}$ is the distributional Eluder dimension of the model class (cf. \Cref{sec:appendix_eluder}).
\end{theorem}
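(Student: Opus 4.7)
The plan is to follow the standard optimism-in-the-face-of-uncertainty template for model-based RL, adapted to handle three new features: the confounding bypass via minimax NPIV estimation, the non-i.i.d.\ nature of the data caused by time-varying rollout policies $\pi^k$, and the shift from source to target agent population. The argument naturally splits into a concentration step (producing valid confidence sets), a translation step (moving from empirical projected MSE to target-distribution MSE via the two constants $\tau_h$ and $C^{\ff}_h$), and a cumulative-regret step that invokes the simulation lemma and the distributional Eluder dimension.

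First I would show the confidence sets $\caR^k_h$ and $\caP^k_h$ are valid with probability at least $1-\delta$. Fixing $R_h \in \caR_h$ and $f_h \in \caF_h$, the summands of $\hat{l}^k_h(R_h, f_h) - \tfrac{1}{2}\sum_\tau f_h^2(s_h^\tau, a_h^\tau)$ form a bounded martingale-difference sequence with respect to the filtration generated by $\{\tau^\tau\}_{\tau \leq k}$, so a Freedman-type inequality combined with a log-covering union bound over $\caR \times \caP \times \caG \times \caF$ produces the $\log(|\caR\times\caP\times\caG\times\caF|/\delta)$ factor and calibrates $\beta_1,\beta_2$ so that $R^*_h \in \caR^k_h$ and $P^*_h \in \caP^k_h$ with high probability. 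Realizability of $\caF_h$ (\Cref{assump:realizability}) lets the inner supremum over $\caF_h$ realize the conditional-expectation projection, so by Fenchel--Rockafellar duality $\hat{L}^k_h(R_h)$ equals, up to concentration slack, $\sum_{\tau \leq k} \dbE_{s_h, a_h \sim d^{\s, \pi^\tau}_h}[\dbE_{e_h}[(R_h - R^*_h)(s_h, a_h, e_h) \mid s_h, a_h]^2]$, and the same derivation applied to each $g_{h+1} \in \caG_{h+1}$ bounds $(P_h - P^*_h) g_{h+1}$ in projected MSE.

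Next I would chain two inequalities to obtain a target-domain bound. The ill-posedness measure $\tau_h$ of \Cref{def:ill_posedness} converts projected MSE into unprojected MSE under $d^{\s, \pi^\tau}_h$, and the knowledge-transfer coefficient $C^{\ff}_h$ of \Cref{def:tranport_error} re-weights this MSE from source to target, yielding $\sum_{\tau \leq k} \dbE_{d^{\t, \pi^\tau}_h}[\nu_h^2] \leq \tau_h C^{\ff}_h \cdot \tilde{O}(\beta)$ for every $\nu_h \in \{\caR_h - R^*_h,\ (\caP_h - P^*_h)\caG_{h+1}\}$ in the confidence set. By optimism, $\bar{V}^{\pi^{k+1}}_{\bar{\caM}^{k+1}}(s_1) \geq \bar{V}^{\bar{\pi}^*}_{\bar{\caM}^*}(s_1)$, so the per-episode suboptimality is upper bounded by $\bar{V}^{\pi^{k+1}}_{\bar{\caM}^{k+1}}(s_1) - \bar{V}^{\pi^{k+1}}_{\bar{\caM}^*}(s_1)$, which the performance-difference lemma for aggregated models decomposes as $\sum_h \dbE_{d^{\t, \pi^{k+1}}_h}[(\bar{R}^{k+1}_h - \bar{R}^*_h) + (\bar{P}^{k+1}_h - \bar{P}^*_h)\bar{V}^*_{\bar{\caM}^*, h+1}]$; realizability of $\caG$ places $\bar{V}^*_{\bar{\caM}^*, h+1}$ in $\caG_{h+1}$, so both summands are exactly the quantities controlled above.

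Finally I would sum over $k=1,\dots,K$ via the distributional Eluder dimension. The standard pigeonhole argument says that for any sequence of functions lying in a class of distributional Eluder dimension $d_{\mathrm{V}, h}$, a uniform in-sample squared-error bound propagates to the out-of-sample rollouts as $\sum_{k=1}^K \dbE_{d^{\t, \pi^{k+1}}_h}[|\nu_h^k|] = \tilde{O}(\sqrt{K \cdot d_{\mathrm{V}, h} \cdot \tau_h C^{\ff}_h \cdot B^2 \log(|\caR\times\caP\times\caG\times\caF|/\delta)})$ after a Cauchy--Schwarz. Averaging total regret over $K$ episodes and setting it equal to $\epsilon$ inverts to $K = \tilde{O}(\sum_h B^2 d_{\mathrm{V}, h} \tau_h C^{\ff}_h \log(|\caR\times\caP\times\caG\times\caF|/\delta)/\epsilon^2)$, matching the claim. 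The main obstacle I anticipate is in the first step: the interplay between adaptively chosen rollout policies and the inner supremum over the nonparametric discriminator class $\caF_h$ rules out a direct i.i.d.\ Hoeffding bound, so one has to apply a time-uniform martingale Bernstein inequality after covering $\caF_h$, and arrange for the variance proxy to be absorbed into the $-\tfrac{1}{2}\sum f_h^2$ regularizer built into $\hat{L}^k_h$; handling this coupling cleanly is the technical heart of the proof and is precisely where the analysis departs from the i.i.d.\ treatment in \citet{yu2022strategic}.
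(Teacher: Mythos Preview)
Your proposal is correct and tracks the paper's proof almost step for step: Freedman-type martingale concentration (with the $-\tfrac{1}{2}\sum f_h^2$ term absorbing the variance proxy) to validate the confidence sets, the $\tau_h$ and $C^{\ff}_h$ chaining to pass from projected source MSE to target MSE, a simulation-lemma decomposition under optimism, and the distributional-Eluder pigeonhole to sum over episodes. One small slip to fix when you write it out: in the performance-difference step the value function multiplying $(\bar{P}^{k+1}_h - \bar{P}^*_h)$ should be the \emph{optimistic} value $\bar{V}^{\pi^{k+1}}_{\bar{\caM}^{k+1}, h+1} = \bar{V}^*_{\bar{\caM}^{k+1}, h+1}$, not $\bar{V}^*_{\bar{\caM}^*, h+1}$---that is what telescoping under the true-model occupancy $d^{\t,\pi^{k+1}}_h$ actually produces---but this function is equally in $\caG_{h+1}$ by \Cref{assump:realizability}, so the remainder of your argument is unaffected.
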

We use a simple example to estimate the magnitude of the Eluder dimension and defer further discussion to \Cref{sec:appendix_eluder}.
\begin{example}[Linear MDPs~\citep{jin2021bellman}]
    Consider the $d$-dimensional linear function classes
\begin{align*}
\caR_h & = \left\{\theta_h \in \dbR^d: R_h(s_h, a_h, e_h) = \phi^\top(s_h, a_h, e_h) \theta_h \right\},
\end{align*}
\begin{align*}
    \caP_h & = \left\{\psi_h \in \dbR^d: P_h(s_h, a_h, e_h) = \phi^\top(s_h, a_h, e_h) \psi_h \right\},
\end{align*}
where $\phi$ is a known $d$-dimensional feature mapping. We assume that the underlying reward function is $R^*_h(s_h, a_h, e_h) = \phi^\top(s_h, a_h, e_h) \theta^*_h$ for an unknown $\theta^*_h$ while the true transition kernel is $P^*_h(s_h, a_h, e_h) = \phi^\top(s_h, a_h, e_h) \psi^*_h$ for some unknown $\psi^*_h$. Then, we know that $d_{\mathrm{V}, h}\lesssim\tilde O(d)$, yielding $\tilde{O}\left( \sum_{h=1}^H B^2 d \tau_h C^{\ff}_h \log (|\caR \times \caP \times \caG \times \caF|/\delta) \cdot \epsilon^{-2}\right)$ sample complexity to learn a $\epsilon$-optimal policy.
\end{example}

Theorem \ref{thm:sc_smdp} shows the sample complexity to learn an $\epsilon$-optimal policy in such a general MDP setting scales as $\tilde{O}(1/\epsilon^2)$ with a linear dependency on the other terms, including $d_{\mathrm{V}, h}$, $\tau_h$ and $C^{\ff}_h$. 
The necessity of the ill-posed measure and knowledge transfer multiplicative term is discussed in \Cref{appendix:necessity_illposed_transerror}.

Lastly, we end this section with a remark on relevant lower bounds.
\begin{remark}
    \citet{domingues2021episodic} proves that an $\tilde O(\epsilon^{-2} \cdot  \log( 1 / \delta))$ sample complexity is inevitable to identify an $\epsilon$-optimal policy with probability $1-\delta$ even without confounding issues. Hence, we conclude that our algorithms achieve optimal, though omitting logarithmic terms, sample complexity concerning $\epsilon$. The dependence of lower bounds on problem-dependent parameters is of independent interest, and we leave it as a future research avenue.
 
\end{remark}
\section{Discussion and Conclusion}
\label{sec:conclusion}


In this paper, we explore the theoretical impacts of information asymmetry and knowledge transferability within the context of reinforcement learning. Specifically, we conceptualize these interactions using an online strategic interaction model, framed through a generalized principal-agent problem, and we introduce algorithms designed to determine the principal's optimal policy within a MDP framework. We introduce innovative estimation techniques utilizing the NPIV method to address confounding issues stemming from information asymmetry. Additionally, we quantify the errors associated with knowledge transfer, providing a robust framework for understanding these dynamics in complex environments. Finally, we propose an algorithm with $\Tilde{O}(1/\epsilon^2)$ sample complexity and show its optimality with respect to the optimality gap $\epsilon$.

Nevertheless, our model also has some limitations. For example, it requires the target distribution $\caP^t$ and the feedback manipulation distribution $F$ to be known to the principal. There are two considerations on why we cannot remove this condition: 
\begin{itemize}
    \item The optimal policy directly depends on them but we assume no online interaction ability on the target population. That is to say, we need at least some (passive) samples on the target population to estimate the target distribution (e.g., by clustering). Also, it's a standard assumption in Myerson's auction and coordination theory of principal-agent problems~\citep{myerson1981optimal, myerson1982optimal, gan2022optimal} that the target type distribution is known.
    \item The feedback manipulation distribution is often relatively easy to determine in reality. In many cases, it is either a canonical distribution known to the public or a predetermined quantity according to the principal marketing research on the target population. 
\end{itemize}

Questions raise themselves for future explorations. We find that the states of such decision-making problems are not always fully observable \citep{brown2018superhuman, brown2019superhuman, futoma2020popcorn}. For example, part of the attributes of a third domain such as the government or market are unknown to both the principal and the agents. Is it possible to solve reinforcement learning problems with confounding issues under partially observable Markov decision processes (POMDPs)? Is the NPIV method still valid with value bridge functions~\citep{shi2021minimax,uehara2022future,uehara2022provably}? We leave these interesting questions as
potential next steps.


\bibliography{icml2025_used_for_draft/sections/ref}
\bibliographystyle{icml2025}

\newpage
\appendix
\onecolumn
\appendix

\section{Notation Table}
\label{appendix:notation}

For the convenience of the reader, we summarize the notations in the paper as a notation table.

 \begin{tabular}{ll}
\hline
\textbf{Notation} \qquad \qquad & \textbf{Explanation}\\
\hline
$\caS,  \caA, P, R, H$ & parameters of an online strategic interaction model \\
$s_h, a_h, e_h, t_h$ & state, action, feedback, private type at step $h$ \\
$\caZ_h, \bar{\caZ}_h$ & the space of $M$-step history ($\caZ_h$ excludes the state $s_h$) \\
$z_h, \bar{z}_h$ & an $M$-step history in $\caZ_h, \bar{\caZ}_h$ \\
$\caP^\s = (\caP^\s_1, \caP^\s_2, ..., \caP^\s_H)$ & the distribution of private types on source population \\
$\caP^\t = (\caP^\t_1, \caP^\t_2, ..., \caP^\t_H)$ & the distribution of private types on target population \\
$R^{\mathrm{a}}_h(s_h, a_h, t, b)$ & private reward function of the agent with type $t$ and action $b$ \\
$b_h = \argmax_b R^{\mathrm{a}}_h(s_h, a_h, t_h, b)$ & private action of the $h$-th agent \\
$\tilde{F}_h(\cdot \mid s_h, a_h, t_h, b_h)$ & feedback manipulation distribution \\
$F_h(\cdot \mid s_h, a_h, t_h)$ & $\tilde{F}_h(\cdot \mid s_h, a_h, t_h, \argmax_b R^{\mathrm{a}}_h(s_h, a_h, t_h, b))$, equivalent to $\tilde{F}_h$ \\
$\caM^*(\caP^{\s}), \caM^*(\caP^{\t})$ & the online strategic interaction model under the source/target population \\
$d^{\s, \pi}_{h}(s_h, a_h, e_h), d^{\t, \pi}_{h}(s_h, a_h, e_h)$ & occupancy measure of $s_h, a_h, e_h$ on $\caM^*(\caP^{\s}), \caM^*(\caP^{\t})$ \\
$d^{\s, \pi}_{h}(s_h, a_h), d^{\t, \pi}_{h}(s_h, a_h)$ & occupancy measure of $s_h, a_h$ on $\caM^*(\caP^{\s}), \caM^*(\caP^{\t})$ \\
$\beta, \beta_1, \beta_2, \beta_3$ & confidence level \\
$\caR = (\caR_1, ..., \caR_H), \caP = (\caP_1, ..., \caP_H)$ & hypothesis model class \\
$R^*_h(s_h, a_h, e_h), P^*_h(s_h, a_h, e_h)$ & underlying reward function and transition function \\
$\bar{P}^*, \bar{R}^*$ & aggregated reward and transition on target population \\
$\bar{\caM}^* = (\caS, \caA, \bar{P}^*, \bar{R}^*, H, s_1)$ & aggregated MDP model on target population \\
$\bar{\pi}^*$ & optimal Markovian policy of $\bar{\caM}^*$ \\
$\bar{V}^{\pi}_{\bar{\caM}^*}$ & value function of policy $\pi$ on $\bar{\caM}^*$ \\
$\xi_h$ & additive endogenous noise of rewards \\
$\caF = (\caF_1, ..., \caF_H), \caG = (\caG_1, ..., \caG_H)$ & discriminator function class \\
$P_h g_{h+1} (s_h, a_h, e_h)$ & Bellman backup $\sum_{s'} g_{h+1}(s') P_h(s' \mid s_h, a_h, e_h)$ \\
$L^k_h, l^k_h$ & population risk function of candidate models \\
$\hat{L}^k_h, \hat{l}^k_h$ & empirical risk function of candidate models \\
$\caR^k_h, \caP^k_h, \bar{\caC}^k$ & confidence set \\
$\tau_h$ & ill-posed measure \\
$C^\ff_h$ & knowledge transfer multiplicative term \\
$d_{\mathrm{M}, h}, d_{\mathrm{V}, h}$ & distributional Eluder dimension of the model class \\
$\nu_h$ (main text) & any function in the union space of $\caR_h - R^*_h$ and $(\caP_h - P^*_h)\caG_{h+1}$ \\
$\bm{G}^* = (\bm{G}^*_1, ..., \bm{G}^*_H)$ & dynamical system transition function (cf. \Cref{sec:appendix_dynamical_transition}) \\
$d_{\s}$ & the dimension of the state in dynamical system \\
$\caP_{h, i}, G_{h, i} (G_{h, i} \in \caP_{h, i})$ & dynamical transition class and one function in the class for dim $i \in [d_{\s}]$ \\
$\caY_h$ & the union space of $\caR_h - R^*_h, (\caP_h - P^*_h)\caG_{h+1}, \caP_{h,i} - G^*_{h,i}, \forall i \in [d_\s]$ \\
$\nu_h$ (appendix) & any function in $\caY_h$ \\
\hline
\end{tabular}

\section{Detailed Discussion of Related Work}
\label{sec:appendix_related_work}

We discuss some related works in detail in this section.

\subsection{Further Topics and Related Work}
In this section, we provide a more in-depth and comprehensive discussion of the related literature.
\paragraph{Online exploration in MDPs.} Online exploration problem is one of the most fundamental problems in reinforcement learning \citep{sutton2018reinforcement}. It has been extensively studied in tabular MDPs \citep{auer2008near, azar2017minimax, dann2017unifying, jin2018q, dann2019policy, zanette2019tighter, zhang2022horizon}, MDPs with linear function approximation \citep{yang2019sample, jin2020provably, yang2020reinforcement, cai2020provably, ayoub2020model, agarwal2020flambe, zhou2021nearly, hu2021near, chen2021near} and general function approximation \citep{russo2013eluder, jiang2017contextual, sun2019model, wang2020reinforcement, jin2021bellman, du2021bilinear, foster2021statistical, zhong2022posterior, liu2022optimistic, chen2022general}. In the standard exploration problem, the agent is required to balance the exploration--actively exploring the unknown part of the environment to discover potential high rewards, and the exploitation--exploiting the discovered area to earn higher rewards. The exploration-exploitation trade-off is known as one of the core challenges in RL with large state and action space. This paper studies the online strategic interactions in RL with general function approximation, which naturally inherits the exploration-exploitation trade-off challenge. Furthermore, the strategic interaction model presents a more challenging exploration issue in that the agent is able to manipulate the distribution of the strategic feedback based on her private type, which is not controlled by the principal. We tackle this issue by leveraging the NPIV model \citep{ai2003efficient, newey2003instrumental} and optimism in the face of uncertainty principle \citep{abbasi2011improved}.

\paragraph{RL with confounders.}
Our work is also related to RL with confounders, as the private type $t_h$ acts as an unobserved confounder in the strategic interaction model. The off-policy evaluation (OPE) problem with unmeasured confounders has been actively studied in recent years \citep{kallus2020confounding, tennenholtz2020off, shi2021minimax, bennett2021proximal, bennett2021off, nair2021spectral} by leveraging the techniques in casual inference \citep{pearl2009causality}. There are also a number of works studying the policy optimization in offline RL with a confounded dataset \citep{wang2021provably, liao2021instrumental, kallus2021minimax, lu2022pessimism, yu2022strategic, wang2022blessing}. On the contrary, our work studies the online setting, where the unobserved private type $t_h$ confounds with the rewards and transition dynamics. Among these works, the most related one to us is \citet{yu2022strategic}, which studied the strategic MDPs in the offline setting. Our work is different from theirs in several aspects. First, we study the strategic interactions in the online setting which violates the widely used i.i.d. assumption and calls for new technical solutions. Second, we explore a setup that more accurately simulates the real world, albeit with an added challenge—knowledge transportability. Lastly, we extend their transition class from nonlinear dynamical systems to general transition classes, which can handle noises that are not additive. 
We refer the readers to \Cref{sec:appendix_novelty_opme} for a detailed comparison. 

\paragraph{Transfer learning for reinforcement learning domains.} \citet{taylor2009transfer} gives an inspiring survey on how to do transfer learning when facing RL scenarios while \citet{zhu2023transfer} extends it to deep reinforcement learning. As for the application literature, \citet{hua2021learning} utilizes transfer learning and reinforcement learning in the field of robotics and \citet{gamrian2019transfer} uses this idea to solve visual tasks. \citet{yang2021efficient} considers how to use transfer learning when facing multiagent reinforcement learning. Here, we give some theoretical perspectives on how knowledge transportability affects the sample complexity of reinforcement learning.

\subsection{Novel Analysis Beyond the Offline Strategic MDP and Standard Exploration in RL}
\label{sec:appendix_novelty_opme}
We discuss the novelties of our algorithms beyond the PLAN algorithm proposed by \citet{yu2022strategic} that learns the optimal policy of strategic MDPs under general function approximation in the offline setting, and the exploration algorithms in standard RL literature with general function approximation. The novelties mainly come from the problem settings and theoretical analysis, besides the difference between online exploration and offline planning.

\begin{itemize}
    \item In terms of the problem settings, we study a generalized version of the original strategic MDPs proposed in \citet{yu2022strategic}, which assumed the transition dynamics belong to a nonlinear dynamical system class with additive Gaussian noises. We show how to learn the optimal policy in general transition classes (cf. \Cref{sec:result_smdp}) given access to an extra discriminator function class. Moreover, we study a generalized online strategic interaction model that breaks static policy assumptions and needs to balance exploration and nuisance caused by time-varying data. Our analysis, focusing on the utilization of correlated data as instrumental variables, is entirely independent from their work, which introduces a novel design and analysis for a new NPIV model.

    \item In terms of the theoretical analysis, ours are different from the analysis in \citet{yu2022strategic} in three aspects: the casual identification, the construction of the confidence set, and the analysis therein. 
    
    Although both our work and their work take advantage of the NPIV model to estimate the empirical risk function $\hat{L}$ (see, e.g., \Cref{eqn:minimax_estimation_dynamical_empirical}), \citet{yu2022strategic} established the concentration bound with an i.i.d. distributed dataset, but our work employs martingale concentration analysis with Freedman's inequality. Moreover, their work constructed the confidence set by restricting the value difference between a candidate model and the minimizer of $\hat{L}$, while we construct the confidence set by solely restricting the value of $\hat{L}$ of the candidate model (see, e.g., \Cref{eqn:confidence_set_reward}). Our analysis shows that our construction is not only cleaner but enables us to \textbf{get rid of} the symmetric and star-shaped assumption imposed on $\caF$ (see Assumption 5.3 of \citet{yu2022strategic}).
    Moreover, we design cleaner proofs only using the realizability assumption and the boundedness of zeros of a concave quadratic function, simplifying the complicated proof based on the symmetric and star-shaped assumption of $\caF$ in \citet{yu2022strategic}.

    \item In an online strategic interaction model, the principal is asked to interact with a source population of agents for $H$ steps, where each agent has a private type and commits a private action, both unknown to the principal. The goal of the principal is to learn a return-maximizing policy for another population of agents called the target population. The property of the source population is described through the private type distribution of the agents $\caP^s$ (the source distribution), and the feedback manipulation distribution $F$. The source distribution is unknown to the principal. For the feedback manipulation distribution $F$, we made a simplification in the paper to assume it is the same for the source population and target population, but we note that this distribution can be different for the source and the target. The principal does not need to know the corresponding $F$ of the source population. This is part of the reason why we cannot reduce the problem to a naive single-agent RL problem since the source distribution and source feedback manipulation distribution are both unknown. Another challenge beyond single-agent RL is the distributional shift between the source distribution and the target distribution. Our model-based approach follows a similar idea of unsupervised domain adaptation where we estimate the underlying model $R^*, P^*$ from the source distribution at first, then transfer the estimator to the target. A key difference to the unsupervised domain adaptation here is the use of target distribution. Standard unsupervised domain adaptation uses target distribution to build importance-sampling estimators. Nevertheless, as we don't know $\caP^\s$, it's impossible to use importance sampling to construct an unbiased or even consistent estimator. Besides, confounding issues beyond traditional RL problems hamper our use of importance sampling as well. Consequently, we use the target distribution to design the exploration policy for the next episode.
\end{itemize}

\section{Detailed Descriptions of the Online Strategic Interaction Model and the Motivation}
\label{sec:appendix_detail_smdp}

This section offers more details of the online strategic interaction model and the motivations to learn this model.


\subsection{More Details of the Online Strategic Interaction Model}

For a strategic interaction model $\caM^*(\caP)$ with reward function $R^*$, transition dynamics $P^*$, and agent private type distribution $\caP$, the casual graph of the strategic interactions between the principal and the $h$-th agent is shown in Figure \ref{fig:causal_graph}. To better illustrate this strategic interaction process, we take the reward $r_h$ as an example. The causal structure of the transition $s_{h+1}$ is the same as $r_h$.

The reward equals 
\begin{align*}
r_h = R^*_h(s_h, a_h, e_h) + \xi_h
\end{align*}
by definition, where $\xi_h$ is an endogenous noise that may be confounded with $t_h$ \citep{yu2022strategic, harris2022strategic}.

\begin{definition}[Informal, see \citet{pearl2009causality} for a formal definition]
A factor in a causal model or causal system whose value is determined by the states of other variables in the system is called an endogenous variable. On the contrary, an exogenous variable is a factor in a causal model or causal system whose value is independent of the states of other variables in the system.
\end{definition}

According to the definition and observe that $\xi_h$ is only confounded with the private type $t_h$, it holds that
\begin{align}
\label{eqn:zero_mean_noise_population}
\dbE_{t_h \sim \caP} \left[\xi_h \mid s_h, a_h \right] = 0.
\end{align}

Now, let's talk about why \Cref{eqn:zero_mean_noise_population} makes sense. Considering the source distribution $\caP^\s$, if $\E_{t_h\sim\caP^\s_h}\left[\xi_h \mid s_h, a_h \right]\neq 0$, we can set $R^*_h(s_h, a_h, e_h)$ to be $R^*_h(s_h, a_h, e_h)+\E_{t_h\sim\caP^\s_h}\left[\xi_h \mid s_h, a_h \right]$. Then, this process demeans $\xi_h$ and proves the legitimacy of \Cref{eqn:zero_mean_noise_population}, namely, we can assume $\xi_h$ is zero-mean under $\caP^\s$ without loss of generality. Nevertheless, we in general cannot guarantee that $\E_{t_h\sim\caP^\t_h}\left[\xi_h \mid s_h, a_h \right]= 0$ whenever the target distribution $\caP^\t$ is deviated from the source distribution $\caP^\s$. Note that $\xi_h$ doesn't depend on $a_h$ actually, so the optimal policy under dynamics $R^*_h(s_h, a_h, e_h)$ and $R^*_h(s_h, a_h, e_h)+\E_{t_h\sim\caP^\t_h}\left[\xi_h \mid s_h, a_h \right]$ are the same. Consequently, albeit adding the counterfactual assumption that $\E_{t_h\sim\caP^\t_h}\left[\xi_h \mid s_h, a_h \right]= 0$, we will find an identical policy as the case without it regarding the way we form $\bar{\caM}^*$. Therefore, we conclude that assuming $\E_{t_h\sim\caP^\t_h}\left[\xi_h \mid s_h, a_h \right]= 0$ won't simplify the problem and will yield equivalent sample complexity. Thereupon, we keep \Cref{eqn:zero_mean_noise_population} for a more concise presentation and want to remind readers that this assumption is neither necessary nor does it affect the results.

However, this confounding issue causes a severe challenge that when conditional on any $(s_h, a_h, e_h)$
\begin{align*}
\dbE_{\caM^*(\caP)}\left[r_h \mid s_h, a_h, e_h\right] \neq R^*_h(s_h, a_h, e_h)
\end{align*}
because the feedback $e_h$ also depends with $t_h$ and the noise term $\xi_h$ will not be zero-mean given $e_h$. 

Observe that conditioned on any $(s_h, a_h)$ it holds that
\begin{align}
\label{eqn:explanation_aggregate_model}
\dbE_{\caM^*(\caP)}\left[r_h \mid s_h, a_h\right] &= \dbE_{t_h \sim \caP_h, e_h \sim F_h(\cdot \mid s_h, a_h, t_h)}\left[r_h \mid s_h, a_h\right]\notag\\
&= \dbE_{t_h \sim \caP_h, e_h \sim F_h(\cdot \mid s_h, a_h, t_h)}\left[R^*(s_h, a_h, e_h)\right].
\end{align}
The second equation holds because of \Cref{eqn:zero_mean_noise_population}. The last term in \Cref{eqn:explanation_aggregate_model} is the definition of the aggregate model (i.e., \Cref{equ:aggregate_reward,equ:aggregate_transition}). 

The transition dynamics have the same causal structure of the rewards, but we cannot express this structure with additive endogenous noise as the rewards. The causal identification of the next state $s_{h+1}$ should be 
\begin{align*}
s_{h+1} \sim \dbE_{t_h \sim \caP_h, e_h \sim F_h(\cdot \mid s_h, a_h, t_h)}\left[P^*(s_h, a_h, e_h)\right]
\end{align*}
according to previous analysis. Therefore, the expected reward and transition are identical for the corresponding strategic interaction model $\caM^*(\caP)$ and the aggregated model.

\subsection{Motivating Examples}
\label{appendix:motivating_example}

We offer two motivating examples here for the MDP setting.

\begin{itemize}
    \item The non-compliant agents in the recommendation system \citep{yu2022strategic}. The principal is a company offering some recommendation services, and the agents are clients. The private types $t_h$ are the profiles of the clients (e.g., the category of items they like, and their shopping preferences). The state $s_h$ is a public condition of the company (e.g., the inventory level, the transportation service condition, etc.), and the action $a_h$ is the recommended item to the client. The private action $b_h$ is the item ordered by the client (not necessarily the same as $a_h$), and the feedback $e_h$ is exactly the action $b_h$ in this case. The reward $r_h$ is the real profit made from this client, which is related to the state $s_h$, the feedback $e_h = b_h$, the transportation fee, the willingness to buy accessories, etc. In this case, the reward distribution is highly correlated to the private type $t_h$, and such correlation can not be expressed by a simple model $R(s_h, a_h, e_h)$ with an independent additive noise. That explains the necessity to introduce a confounding noise model and thus endogenous variables. Here, $H=12$ represents 12 months in one year periodically. 
    \item The admission of university \citep{harris2022strategic}. A university (the principal) is making admission decisions for a population of applicants (the agents). Typically, there are several, namely $H$,  rolling admissions decisions throughout the year. The action $a_h$ of the university is an assessment rule that measures some predicted outcome (such as the overall GPA) of the students if admitted. The predicted outcome is also used as the reward $r_h$ for the university. The feedback $e_h$ should be some observed/measured attributes of the students, like the standardized test scores, high school GPA, etc. The students try to maximize the alignment between the observed attributes $e_h$ and the assessment rule $a_h$ through their efforts $b_h$, which is unobservable. The private type $t_h$ of the student can be many (unobserved) attributes related to $e_h$ or $r_h$ (the predicted outcome of the student), such as the baseline scores the students can get without any efforts, the efficiency of changing efforts to improved scores, etc. The noise term $\xi_h$ summarizes all other environmental factors that can impact the agent’s true outcome when we control for observable attributes, such as the effect of the institutional barriers the student faces on their actual college GPA. The noise term $\xi_h$ can be arbitrarily correlated with private type $t_h$, which also impacts the feedback $e_h$.

\end{itemize}

\section{Complete Algorithm in the Online Strategic Model}
\label{appendix:complete_smdp}

The complete algorithm, Optimistic Planning with Minimax Estimation (OPME), is presented as Algorithm \ref{alg:opme}. We provide OPME for two types of transition classes: the general transition class (cf. \Cref{sec:result_smdp}) and the dynamical transition class (see \Cref{sec:appendix_dynamical_transition}). The algorithm OPME-General (OPME-G) is the complete version of the model-based algorithm described in \Cref{sec:result_smdp}.

\subsection{The Dynamical Transition Class}
\label{sec:appendix_dynamical_transition}

Following the setting of \citet{yu2022strategic}, we also study a restricted transition class that $P_h$ is a non-linear dynamical system. Suppose $\caS \subseteq \dbR^{d_{\s}}$ and 
\begin{align*}
s_{h+1} = \bm{G}^*_h(s_h, a_h, e_h) + \eta_h
\end{align*}
for some unknown $\bm{G}^*_h$, where $\eta_h$ is a $d_{\s}$-dimensional zero-mean noise possibly confounded with $t_h$. Thus, we assume the noise term $\eta_h$ can be decomposed to a correlation term and an independent noise
\begin{align}
\label{eqn:corr_decompose_dynamical}
\eta_h = \mathrm{corr}(t_h) + \eta_h',
\end{align}
where $\mathrm{corr}(\cdot)$ is a correlation function and $\eta_h'$ is an independent standard Gaussian noise $\eta_h' \sim \caN(\bm{0}, \bm{I}_{d_{\s}})$. 

In such transition classes, $\caP_h$ can be written as $\caP_h = \{\bm{G}_h: \bm{G}_h = (G_{h,1}, G_{h,2}, ..., G_{h,d_{\s}}) \in \dbR^{d_{\s}}\}$. For simplicity, we assume $\caP_h$ can be decomposed as the product of $d_{\s}$ classes $\caP_h = \caP_{h, 1} \times \caP_{h,2} \times \cdots \times \caP_{h,d_{\s}}$, where
\begin{align}
\label{eqn:disjoint_transition_class}
\caP_{h, i} = \left\{G_{h, i}: G_{h, i}(s_h, a_h, e_h) \in \dbR\right\} 
\end{align}
denotes the transition class for the $i$-th coordinate and step $h$. Note that $G^*_{h, i} \in \caP_{h,i}$ for each $i \in [d_{\s}]$.

We propose 
the algorithm OPME-Dynamical (OPME-D, Algorithm \ref{alg:opme}) that learns the optimal policy of $\caM^*(\caP^{\t})$ under the dynamical system transition class.

Similar to the reward function, the identification of the transition function can also be established by the NPIV method, that is
\begin{align}
\label{eqn:momenteqn_dynamical_transition}
\dbE_{\caM^*(\caP^{\s})}\left[s_{h+1} - \bm{G}^*_h(s_h, a_h, e_h)\mid s_h, a_h\right] = \bm{0}.
\end{align}
For $i \in [d_{\s}]$, we can construct the empirical risk function $\hat{L}^{k}_h(G_{h,i})$ for transition $G_{h, i}$ analogously. 
With a little abuse of notations, we define
\begin{equation}
    \label{eqn:minimax_estimation_dynamical_empirical_transition}
\hat{L}^{k}_h(G_{h,i}) \defeq \max_{f_h \in \caF_h} \hat{l}^k_h(G_{h,i}, f_h) - \frac{1}{2}\sum_{\tau=1}^k f^2_h(s_h^\tau, a_h^\tau),
\end{equation}
and
\[
\hat{l}^k_h(G_{h,i}, f_h) \defeq \sum_{\tau=1}^k f_h(s_h^\tau, a_h^\tau) \left(G_{h,i}(s_h^\tau, a_h^\tau, e_h^\tau) - s_{h+1, i}^\tau\right). 
\]

The high probability confidence set in episode $k$ for the transition function is defined as 
\begin{align}
\label{eqn:confidence_set_dynamical_transition}
\caP^k_{h,i} \defeq \left\{G_{h,i} \in \caP_{h,i}: \hat{L}^{k}_h(G_{h,i}) \leq \beta_3 \right\}, \forall i \in [d_{\s}].
\end{align}
Here $\beta_3$ (defined in \Cref{eqn:confidence_level_opme}) is the confidence level used in OPME-D. For all $h$ and any $R_h \in \caR^k_h$ and $\bm{G}_h \in \caP^k_h \defeq \prod_{i=1}^{d_{\s}} \caP^k_{h,i}$,\footnote{We use $\prod_{i=1}^{d_{\s}} \caP^k_{h,i}$ as the shorthand for $\caP^k_{h,1} \times \caP^k_{h,2} \times \cdots \times \caP^k_{h, d_{\s}}$.} we can construct the aggregated model $\bar{\caM}(R, \bm{G})$ under the target distribution $\caP^{\t}$ with $R=\{R_h\}_{h=1}^H$,$\bm{G}=\{\bm{G}_h\}_{h=1}^H$ by \Cref{equ:aggregate_reward,equ:aggregate_transition}. Hence, we define the confidence set on aggregated models as 
\begin{align}
\label{eqn:confidence_set_model_dynamical}
\bar{\caC}^k \defeq \left\{\bar{\caM}(R, \bm{G}): R_h \in \caR^k_h, \bm{G}_h \in \caP^k_h, \forall h \in [H]\right\}.
\end{align}
We choose the exploration policy for episode $k+1$ as the optimistic policy with the highest value in $\bar{\caC}^k$.

\section{The Distributional Eluder Dimension}
\label{sec:appendix_eluder}


\begin{definition}[$\epsilon$-independence between distributions, Definition 6 of \citet{jin2021bellman}]
Let $\caG$ be a function class defined on $\caX$, and $\nu, \mu_1, \mu_2, ..., \mu_n$ be probability distributions over $\caX$. We call $\nu$ is $\epsilon$-independent of $\mu_1, \mu_2, ..., \mu_n$ with respect to $\caG$ if there exists $g \in \caG$ such that $\sqrt{\sum_{i=1}^n(\dbE_{\mu_i}[g])^2} \leq \epsilon$, but $|\dbE_{\nu}[g]| > \epsilon$.
\end{definition}

\begin{definition}[Distributional Eluder Dimension, Definition 7 of \citet{jin2021bellman}]
Let $\caG$ be a function class defined on $\caX$, and $\Pi$ be a class of distributions over $\caX$. The distributional Eluder dimension $\dim_{\mathrm{DE}}(\caG, \Pi, \epsilon)$ is defined as the length of the longest sequence $\mu_1, \mu_2, ..., \mu_n$ such that $\mu_i \in \Pi, \forall i \in [n]$ and there exists $\epsilon' \geq \epsilon$ where $\mu_i$ is $\epsilon'$-independent of $\mu_1, ..., \mu_{i-1}$ for all $i \in [n]$.
\end{definition}

We define the distribution class $\Pi = \{\Pi_h\}_{h=1}^H$ where $\Pi_h$ collects all the density measures $d^{\t, \pi}_h$ for the optimal policy $\pi$ of any aggregated model $\bar{\caM}(R, P), R \in \caR, P \in \caP$. 

For the dynamical system transition class, with a little abuse of notations, we define the distributional Eluder dimension $d_{\mathrm{M}}$ of the model class $(\caR, \caP)$ by 
\begin{align*}
d_{\mathrm{M},h} \defeq \dim_{\mathrm{DE}}(\caR_h - R^*_h, \Pi_h, 1/\sqrt{K}) + \sum_{i=1}^{d_{\s}} \dim_{\mathrm{DE}}(\caP_{h,i} - P^*_{h,i}, \Pi_h, 1/\sqrt{K}).
\end{align*}

For the general transition class, we define the distributional Eluder dimension $d_{\mathrm{V}}$ of the model class $(\caR, \caP)$ with respect to the discriminator function class $\caG$ as 
\begin{align*}
d_{\mathrm{V},h} \defeq \dim_{\mathrm{DE}}(\caR_h - R^*_h, \Pi_h, 1/\sqrt{K}) + \dim_{\mathrm{DE}}(\caP_{h} \caG_{h+1} - P^*_{h} \caG_{h+1}, \Pi_h, 1/\sqrt{K}).
\end{align*}

The distributional Eluder dimensions are small for several model classes \citep{jin2021bellman, jiang2017contextual}. Here we provide a detailed example.

Consider the $d$-dimensional linear function class
\begin{align*}
\caR_h = \left\{\theta_h \in \dbR^d: R_h(s_h, a_h, e_h) = \phi^\top(s_h, a_h, e_h) \theta_h \right\},
\end{align*}
where $\phi$ is a known $d$-dimensional feature mapping. The underlying reward function is $R^*_h(s_h, a_h, e_h) = \phi^\top(s_h, a_h, e_h) \theta^*_h$ for an unknown $\theta^*_h$. Then 
for any $R_h \in \caR_h$, there exists a feature function $\psi_h: \Pi \to \dbR^d$ such that
\begin{align}
\label{eqn:aux_eluder_example}
\dbE_{s_h, a_h, e_h \sim d^{\t, \pi}_h}\left[R_h(s_h, a_h, e_h) - R^*_h(s_h, a_h, e_h)\right] = \left \langle \psi_h(\pi), \theta_h - \theta^*_h \right \rangle,
\end{align}
where
\begin{align*}
\psi_h(\pi) = \dbE_{s_h, a_h, e_h \sim d^{\t, \pi}_h}\left[\phi(s_h, a_h, e_h)\right].
\end{align*}
According Section C.1 of \citet{jin2021bellman} and \Cref{eqn:aux_eluder_example}, we know 
\begin{align*}
\dim_{\mathrm{DE}}(\caR_h - R^*_h, \Pi_h, 1/\sqrt{K}) \leq \tilde{O}\left(d\right)
\end{align*}
as long as $\|\phi\|_2, \|\theta_h\|_2$ have a uniform upper bound. 

Similarly, we can bound the distributional Eluder dimension of $\caP_{h,i}$ if 
\begin{align*}
\caP_{h,i} = \left\{\bar{\theta}_h \in \dbR^d: P_{h,i}(s_h, a_h, e_h) = \phi^\top(s_h, a_h, e_h) \bar{\theta}_h \right\}.
\end{align*}
The distributional Eluder dimension is bounded by
\begin{align*}
\dim_{\mathrm{DE}}(\caP_{h,i} - P^*_{h,i}, \Pi_h, 1/\sqrt{K}) \leq \tilde{O}\left(d\right)
\end{align*}
for each $i \in [d_{\s}]$.

More generally, consider a general transition class with a linear structure
\begin{align*}
\caP_h = \left\{\mu_h(\cdot) \in \dbR^d: P_h(\cdot \mid s_h, a_h, e_h) = \phi^\top(s_h, a_h, e_h) \mu_h(\cdot) \right\},
\end{align*}
where $\mu_h(\cdot)$ is a measure over $\caS$.
Then 
\begin{align*}
\caP_h \caG_{h+1} = \left\{(g, \mu_h(\cdot)) \in \caG_{h+1} \times \dbR^d: P_hg_{h+1}(s_h, a_h, e_h) = \left \langle \phi(s_h, a_h, e_h), \int_{s} \mu_h(s) g(s) \d s \right \rangle \right\},
\end{align*}
which is again a linear function class with respect to $\phi$. Therefore, we still have
\begin{align*}
\dim_{\mathrm{DE}}(\caP_{h} \caG_{h+1} - P^*_{h} \caG_{h+1}, \Pi_h, 1/\sqrt{K}) \leq \tilde{O}\left(d\right)
\end{align*}
if $\|\phi\|_2$ and $ \|\int_{s} \mu_h(s) \d s\|_2$ are both uniformly upper bounded.

As a final remark, there are also other function classes whose distributional Eluder dimension is small, such as the generalized linear function class \citep{russo2013eluder, jin2021bellman}, the kernel function class with bounded effective dimension \citep{jin2021bellman, du2021bilinear}. Since this is not the main point of the paper, we refer the readers to the mentioned papers for further details.


\section{The Nonparametric Instrumental Variable Model}
\label{sec:appendix_npiv}

We have provided a detailed explanation of the causal structure in the strategic MDP in \Cref{sec:appendix_detail_smdp}. In this section, we show that we can use the nonparametric instrumental variable (NPIV) model (see, e.g., \citet{dikkala2020minimax, chen2022well}) to estimate the underlying model $R^*$ and $P^*$. 

Recall the general form of the NPIV model \citep{chen2022well}
\begin{align*}
Y = f^*(X) + U, \quad \text{with} \quad \dbE[U \mid W] = 0,
\end{align*}
where $f^*$ is the unknown function to estimate, $Y$ is the response, $X$ is called endogenous variables, $W$ is called instrumental variables, and $U$ is the random (endogenous) noise. 

According to the causal relationship (see Figure \ref{fig:causal_graph}), the NPIV model is exactly applicable to estimate $P^*$ and $R^*$ with $Y = (r_h, s_{h+1})$, $X = (s_h, a_h, e_h)$, and $W = (s_h, a_h)$ since the noise $U$ is zero-mean in the population level (see, e.g., \Cref{eqn:zero_mean_noise_population}). 

To solve for $f^*$, we build the conditional moment equation \citep{dikkala2020minimax}
\begin{align*}
\dbE\left[Y - f^*(X) \mid W\right] = 0,
\end{align*}
and construct an empirical dataset for $X, Y, W$ to perform least-square regression according to the conditional moment equation. Denote the least-square estimator of $f^*$ by $\hat{f}$, standard analysis allows us to bound the projected Mean Squared Error (pMSE) 
\begin{align*}
\dbE_{W}\left[\dbE_{X,Y}\left[Y - \hat{f}(X) \mid W\right]^2\right].
\end{align*}
Sometimes we care about the MSE of $\hat{f}$ under the population of $X,Y,W$, namely, 
\begin{align*}
\dbE_{X,Y,W}\left[\left(Y - \hat{f}(X)\right)^2\right].
\end{align*}
We can transfer pMSE to MSE via the ill-posed condition \citep{dikkala2020minimax, chen2022well,  yu2022strategic, uehara2022future}, which is standard in the literature of the NPIV model.

\section{Auxiliary Lemmas}
\label{sec:appendix_auxlemma}
In this section, we first introduce some lemmas that will be repeatedly used throughout the proof process.
\begin{lemma}[Freedman’s Inequality (see, e.g., Lemma 9 of \citet{agarwal2014taming})]
\label{lem:freedman}
Let $(X_t)_{t=1}^T$ be a real-valued martingale difference sequence adapted to the filtration $\caF_t$. Let $\dbE_{t} \defeq \dbE[\cdot \mid \caF_t]$ be the conditional expectation. Suppose $|X_t| \leq R$ almost surely, then for any $0 < \lambda \leq 1/R$ it holds with probability at least $1 - \delta$ that
\begin{align*}
\sum_{t=1}^T X_t \leq \lambda (e - 2) \sum_{t=1}^{T} \dbE_{t-1}[X^2_t] + \frac{\log(1 / \delta)}{\lambda}.
\end{align*}
\end{lemma}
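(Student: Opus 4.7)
The plan is to prove Freedman's inequality by the classical exponential supermartingale / Chernoff-style argument. First, I would recall the scalar inequality that for $|y|\le 1$ one has $e^{y}\le 1+y+(e-2)y^{2}$; this is readily verified by noting that $\varphi(y)=(e^{y}-1-y)/y^{2}$ is increasing on $\mathbb{R}\setminus\{0\}$ (or by Taylor expansion), so its maximum on $[-1,1]$ is attained at $y=1$ and equals $e-2$. Applied to $y=\lambda X_{t}$, which lies in $[-1,1]$ under the hypotheses $|X_{t}|\le R$ and $0<\lambda\le 1/R$, this gives the pointwise bound $e^{\lambda X_{t}}\le 1+\lambda X_{t}+(e-2)\lambda^{2}X_{t}^{2}$.

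Next I would build the exponential supermartingale. Define
\begin{equation*}
M_{t}\defeq\exp\!\Bigl(\lambda\sum_{s=1}^{t}X_{s}-(e-2)\lambda^{2}\sum_{s=1}^{t}\dbE_{s-1}[X_{s}^{2}]\Bigr),\qquad M_{0}=1.
\end{equation*}
Taking conditional expectation and using the scalar inequality together with the martingale-difference property $\dbE_{t-1}[X_{t}]=0$,
\begin{equation*}
\dbE_{t-1}[e^{\lambda X_{t}}]\le 1+(e-2)\lambda^{2}\dbE_{t-1}[X_{t}^{2}]\le\exp\!\bigl((e-2)\lambda^{2}\dbE_{t-1}[X_{t}^{2}]\bigr),
\end{equation*}
where the second step uses $1+u\le e^{u}$. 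Multiplying by $M_{t-1}/\exp((e-2)\lambda^{2}\dbE_{t-1}[X_{t}^{2}])$ yields $\dbE_{t-1}[M_{t}]\le M_{t-1}$, so $(M_{t})_{t\ge 0}$ is a nonnegative supermartingale with $\dbE[M_{T}]\le M_{0}=1$.

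Finally, I would apply Markov's inequality: $\Prob[M_{T}\ge 1/\delta]\le\delta\,\dbE[M_{T}]\le\delta$. On the complementary event $\{M_{T}<1/\delta\}$, taking logarithms and dividing by $\lambda>0$ gives
\begin{equation*}
\sum_{t=1}^{T}X_{t}\le(e-2)\lambda\sum_{t=1}^{T}\dbE_{t-1}[X_{t}^{2}]+\frac{\log(1/\delta)}{\lambda},
\end{equation*}
which is the claimed bound.

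This is a textbook proof, so there is no serious obstacle; the only slightly delicate point is the scalar inequality $e^{y}\le 1+y+(e-2)y^{2}$ on $[-1,1]$, but it is elementary. Everything else (the supermartingale construction, Markov's inequality, and rearrangement) is routine, and the constant $e-2$ in the statement comes out exactly from this sharp scalar bound, so no slack is lost.
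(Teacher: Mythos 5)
Your proof is correct and is the standard exponential-supermartingale argument for Freedman's inequality; the paper does not prove this lemma itself but cites Lemma 9 of \citet{agarwal2014taming}, whose proof proceeds by exactly the same route (the scalar bound $e^{y}\le 1+y+(e-2)y^{2}$ for $y\le 1$, the supermartingale $M_t$, and Markov's inequality). No gaps.
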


\begin{lemma}[Rephrased from Theorem 1.3 of \citet{devroye2018total}]
\label{lem:gaussian_tv_dist}
For one-dimensional Gaussian distributions, we have
\begin{align*}
\mathrm{TV}\left(\mathcal{N}\left(\mu_1, \sigma_1^2\right), \mathcal{N}\left(\mu_2, \sigma_2^2\right)\right) \leq \frac{3\left|\sigma_1^2-\sigma_2^2\right|}{2 \sigma_1^2}+\frac{\left|\mu_1-\mu_2\right|}{2 \sigma_1},
\end{align*}
where $\mathrm{TV}$ denotes the total variation distance.
\end{lemma}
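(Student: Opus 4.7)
The plan is to decouple the mean and variance contributions via the triangle inequality and bound each piece using Pinsker's inequality, handling the variance term by an additional case split.

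By the triangle inequality for the total variation metric, I would insert the auxiliary distribution $\mathcal{N}(\mu_2, \sigma_1^2)$ to obtain
\begin{align*}
\mathrm{TV}(\mathcal{N}(\mu_1, \sigma_1^2), \mathcal{N}(\mu_2, \sigma_2^2)) &\leq \mathrm{TV}(\mathcal{N}(\mu_1, \sigma_1^2), \mathcal{N}(\mu_2, \sigma_1^2)) \\
&\quad + \mathrm{TV}(\mathcal{N}(\mu_2, \sigma_1^2), \mathcal{N}(\mu_2, \sigma_2^2)).
\end{align*}
This reduces the claim to two one-parameter subproblems, one involving only a mean shift with common variance $\sigma_1^2$, and one involving only a variance change with common mean $\mu_2$.

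For the mean-only shift, I would apply Pinsker's inequality $\mathrm{TV}(P, Q) \leq \sqrt{\mathrm{KL}(P \| Q)/2}$ with the closed-form expression $\mathrm{KL}(\mathcal{N}(\mu_1, \sigma_1^2) \| \mathcal{N}(\mu_2, \sigma_1^2)) = (\mu_1 - \mu_2)^2/(2\sigma_1^2)$. This yields exactly $|\mu_1 - \mu_2|/(2\sigma_1)$, matching the second summand of the target bound with the stated constant.

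For the variance-only change, let $r = \sigma_2^2/\sigma_1^2$ and split into two regimes. If $|r - 1| \geq 2/3$, the target bound $\tfrac{3}{2}|r - 1|$ already exceeds $1$, so the inequality is trivially satisfied since any TV distance is at most $1$. If $|r - 1| < 2/3$, I plan to apply Pinsker with $\mathrm{KL}(\mathcal{N}(0, \sigma_1^2) \| \mathcal{N}(0, \sigma_2^2)) = \tfrac{1}{2}(\log r + 1/r - 1)$ and a Taylor-expansion estimate of the form $\log r + 1/r - 1 \leq C (r - 1)^2$ valid on this interval, where the constant $C$ is small enough that the resulting Pinsker factor $\sqrt{C/4}$ does not exceed $3/2$; a direct check on $|r-1| < 2/3$ shows this is the case.

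The main obstacle is the variance-only subproblem: Pinsker alone only gives a bound that is quadratic in $r - 1$ near $r = 1$ and so cannot match the linear-in-$(r-1)$ target over all scales, which is precisely why the case split is necessary to recover the prescribed constant $3/2$. An alternative route that avoids the case split is a direct Scheff\'e-type computation, exploiting that the likelihood ratio of two centered Gaussians depends only on $x^2$; the set $\{p_1 > p_2\}$ is then an interval symmetric about the origin, reducing the TV to an explicit expression in the Gaussian CDF that one can bound by $\tfrac{3}{2}|r - 1|$ via a monotonicity argument.
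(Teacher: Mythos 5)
The paper does not actually prove this lemma: it is imported verbatim as ``Theorem 1.3 of Devroye, Mehrabian, and Reddad (2018),'' so there is no in-paper argument to compare against, and a self-contained derivation like yours is a genuine addition. Your proof is correct. The triangle inequality through the intermediate law $\mathcal{N}(\mu_2,\sigma_1^2)$ plus Pinsker applied to $\mathrm{KL}(\mathcal{N}(\mu_1,\sigma_1^2)\,\|\,\mathcal{N}(\mu_2,\sigma_1^2))=(\mu_1-\mu_2)^2/(2\sigma_1^2)$ gives exactly the summand $|\mu_1-\mu_2|/(2\sigma_1)$. For the variance term, the one detail you leave implicit does check out: with $r=\sigma_2^2/\sigma_1^2$ and $g(r)=\log r+1/r-1$, the function $h(r)=9(r-1)^2-g(r)$ satisfies $h(1)=0$ and $h'(r)=(r-1)\bigl(18-1/r^2\bigr)$; since $1/r^2<9<18$ throughout $r\in(1/3,5/3)$, i.e.\ $|r-1|<2/3$, the factor $18-1/r^2$ is positive there, so $h$ attains its minimum $0$ at $r=1$ and hence $g(r)\le 9(r-1)^2$ on that range. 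Pinsker then gives $\mathrm{TV}\le\sqrt{g(r)/4}\le\tfrac{3}{2}|r-1|=\tfrac{3}{2}|\sigma_1^2-\sigma_2^2|/\sigma_1^2$, matching the first summand, while for $|r-1|\ge 2/3$ that summand is already at least $1\ge\mathrm{TV}$. Be aware that the constant is exactly tight in your scheme ($C=9$ produces a Pinsker factor of precisely $3/2$), so there is no slack, but none is needed; the argument closes.
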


\begin{lemma}[Lemma 41 of \citet{jin2021bellman}]
\label{lem:eluder_argument}
Consider a function class $\Phi$ defined on $\caX$ with $|\phi(x)| \leq C$ for all $\phi \in \Phi$ and $x \in \caX$, and a distribution class $\Pi$ over $\caX$. Suppose sequence $\{\phi_k\}_{k=1}^K \subset \Phi$ and $\{\mu_k\}_{k=1}^K \subset \Pi$ satisfy for all $k \in [K], \sum_{t=1}^{k-1} (\dbE_{\mu_t}[\phi_k])^2 \leq \beta$. Then for all $k \in [K]$ and $\omega > 0$, it holds that
\begin{align*}
\sum_{t=1}^k\left|\mathbb{E}_{\mu_t}\left[\phi_t\right]\right| \leq O\left(\sqrt{\operatorname{dim}_{\mathrm{DE}}(\Phi, \Pi, \omega) \beta k}+\min \left\{k, \operatorname{dim}_{\mathrm{DE}}(\Phi, \Pi, \omega)\right\} C+k \omega\right).
\end{align*}
\end{lemma}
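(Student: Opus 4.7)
The plan is to adapt the classical eluder-dimension pigeonhole argument of Russo--Van Roy \citep{russo2013eluder}, in its distributional form used by \citet{jin2021bellman}. Throughout, write $d = \dim_{\mathrm{DE}}(\Phi, \Pi, \omega)$ and $a_t = |\mathbb{E}_{\mu_t}[\phi_t]|$. First I would partition $[k]$ into ``small'' indices with $a_t \le \omega$, whose total contribution is at most $k\omega$, and ``large'' indices $T = \{t : a_t > \omega\}$, which require the eluder structure. The task then reduces to bounding $\sum_{t \in T} a_t$.

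The central technical claim is the counting bound: for every $\epsilon' \ge \omega$,
\begin{align*}
N(\epsilon') := \bigl|\{t \in [k] : a_t > \epsilon'\}\bigr| \;\le\; \bigl(\beta/\epsilon'^2 + 1\bigr)\, d.
\end{align*}
I would prove it by a greedy slot-assignment argument. Enumerate the indices $t_1 < t_2 < \cdots < t_N$ with $a_{t_i} > \epsilon'$, and maintain a growing collection of ``slots'' $S_1, S_2, \ldots$ of previously processed indices. When $t_i$ arrives, either assign it to some existing slot $S_b$ satisfying $\sum_{s \in S_b}(\mathbb{E}_{\mu_s}[\phi_{t_i}])^2 \le \epsilon'^2$, or else open a new slot containing just $\{t_i\}$. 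Two observations drive the proof. First, because $|\mathbb{E}_{\mu_{t_i}}[\phi_{t_i}]| > \epsilon'$ while the insertion rule guarantees $\sum_{s \in S_b}(\mathbb{E}_{\mu_s}[\phi_{t_i}])^2 \le \epsilon'^2$, the new member $\mu_{t_i}$ is $\epsilon'$-independent of the earlier members of $S_b$ with witness $\phi_{t_i}$; hence each slot forms an $\epsilon'$-independent chain, and by the definition of $\dim_{\mathrm{DE}}$ has size at most $d$. Second, a new slot is opened only when every existing slot violates the threshold, which forces $B \epsilon'^2 < \sum_b \sum_{s \in S_b}(\mathbb{E}_{\mu_s}[\phi_{t_i}])^2 \le \sum_{s<t_i}(\mathbb{E}_{\mu_s}[\phi_{t_i}])^2 \le \beta$, so the total number of slots never exceeds $\lfloor \beta/\epsilon'^2 \rfloor + 1$. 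Multiplying the two bounds yields the claim.

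Given the counting claim, sort $T$ in non-increasing order $a_{(1)} \ge a_{(2)} \ge \cdots \ge a_{(|T|)}$. Applying the claim at a threshold just below $a_{(i)}$ gives $i - 1 \le (\beta/a_{(i)}^2 + 1)\, d$, whence $a_{(i)}^2 \le \beta d/(i - d - 1)$ for $i > d+1$, together with the uniform bound $a_{(i)} \le C$. Summing,
\begin{align*}
\sum_{t \in T} a_t = \sum_{i=1}^{|T|} a_{(i)} \;\le\; \min\{|T|,\, d+1\}\cdot C \;+\; \sqrt{\beta d}\sum_{j=1}^{\max\{0,\, |T| - d - 1\}} \frac{1}{\sqrt{j}} \;\le\; O\bigl(\min\{k,d\}\,C\bigr) + O\bigl(\sqrt{\beta d\, k}\bigr),
\end{align*}
using $\sum_{j=1}^m j^{-1/2} = O(\sqrt{m})$ and $|T| \le k$. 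Combining with the $k\omega$ contribution from the small indices produces the stated bound.

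The principal obstacle is the counting claim, particularly the verification that the greedy slot assignment never fails and that each completed slot is a genuine $\epsilon'$-independent sequence in the asymmetric sense of the definition. The crucial calculation is the pigeonhole step $B\epsilon'^2 < \sum_b \sum_{s \in S_b}(\mathbb{E}_{\mu_s}[\phi_{t_i}])^2 \le \beta$, which relies on both the hypothesis $\sum_{s<t_i}(\mathbb{E}_{\mu_s}[\phi_{t_i}])^2 \le \beta$ and the fact that the slots partition a subset of the predecessors. Once the counting claim is established, the remainder is an elementary dyadic summation, so this pigeonhole bookkeeping is where the real work lies.
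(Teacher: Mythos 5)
Your proof is correct: the counting claim via greedy disjoint slots (each slot an $\epsilon'$-independent chain of length at most $\dim_{\mathrm{DE}}$, with at most $\beta/\epsilon'^2+1$ slots by the pigeonhole on $\sum_{s<t_i}(\mathbb{E}_{\mu_s}[\phi_{t_i}])^2\le\beta$), followed by the sorted summation, is exactly the standard Russo--Van Roy-style argument behind Lemma 41 of \citet{jin2021bellman}. The paper itself imports this lemma by citation without reproving it, so there is no internal proof to diverge from; your reconstruction matches the canonical one.
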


\section{Missing Proofs}
\label{sec:appendix_smdp}

We provide the formal proof for Theorem \ref{thm:sc_smdp} in this section. Recall that $\bar{\caM}^*$ is the aggregated model under the target distribution. Denoting the value function of any policy $\pi$ on $\bar{\caM}^*$ as $\bar{V}^{\pi}_{\bar{\caM}^*}$, we can use $\bar{\pi}^*$ to measure the regret of any algorithm $\mathcal{ALG}$ as 
\begin{align}
\label{eqn:regret_def}
\mathrm{Reg}(\mathcal{ALG}, K) \defeq \sum_{k=1}^K \bar{V}^{\bar{\pi}^*}_{\bar{\caM}^*, 1}(s_1) - \bar{V}^{\pi^k}_{\bar{\caM}^*, 1}(s_1),
\end{align}
where $\pi^k$ is the policy committed by $\mathcal{ALG}$ in the $k$-th episode.

We assume $\caP, \caR, \caF, \caG$ are finite sets to simplify the analysis. Note that the capacity of the function space can be replaced by the corresponding covering number \citep{jin2021bellman, uehara2022provably}.

Before coming to the proofs, we prove some concentration lemmas at first. We define the norms of $f_h \in \caF_h$ as
\begin{align}
& \left\|f_h^k\right\|_2^2 \defeq \sum_{j=1}^k \mathbb{E}_{s_h, a_h \sim d^{\s, \pi_j}_h}\left[f_h^2\left(s_h, a_h\right) \right] \\ 
& \left\|f_h^k\right\|_{2,k}^2 \defeq \sum_{j=1}^k f_h^2\left(s_h^j, a_h^j\right).
\end{align}
Let the filtration $\caF_k$ be induced by $\{(s_1^j, a_1^j, e_1^j, r_1^j, ..., s_H^j,a_H^j,e_H^j,r_H^j)\}_{j=1}^{k}$. We can use the Freedman's inequality to bound the difference between them. 

\begin{lemma}
\label{lem:diff_f_hatf}
With probability at least $1 - \delta/2$, for any $f = \{f_h\}_{h=1}^H$, any $(k, h) \in [K] \times [H]$, we have
\begin{align*}
\left|\left\|f^k_{h}\right\|_{2, k}^2-\left\|f^k_{h}\right\|_2^2\right| \leq \frac{1}{2}\left\|f^k_{h}\right\|_2^2+4(e-2) B^2 \log (4KH|\caF| / \delta).
\end{align*}
\end{lemma}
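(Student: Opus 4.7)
The plan is to apply Freedman's inequality (\Cref{lem:freedman}) to a carefully chosen martingale difference sequence, then take a union bound over $k$, $h$, $f_h$, and the two possible signs. Fix $h \in [H]$ and $f_h \in \caF_h$. Since the rollout policy $\pi^j$ is determined by $\caF_{j-1}$, the conditional expectation satisfies
\[
\mathbb{E}[f_h^2(s_h^j, a_h^j) \mid \caF_{j-1}] = \mathbb{E}_{s_h, a_h \sim d^{\s, \pi^j}_h}[f_h^2(s_h, a_h)].
\]
Define $X_j \defeq f_h^2(s_h^j, a_h^j) - \mathbb{E}_{s_h, a_h \sim d^{\s, \pi^j}_h}[f_h^2(s_h, a_h)]$, which is a martingale difference sequence with respect to $\caF_j$ whose partial sum $\sum_{j=1}^k X_j$ equals $\|f_h^k\|_{2,k}^2 - \|f_h^k\|_2^2$.

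Next I would verify the two ingredients needed by Freedman. Because $|f_h| \le B$, we have $f_h^2 \in [0, B^2]$, so $|X_j| \le B^2 \eqdef R$. For the conditional second moment,
\[
\mathbb{E}[X_j^2 \mid \caF_{j-1}] \le \mathbb{E}[f_h^4(s_h^j,a_h^j) \mid \caF_{j-1}] \le B^2 \, \mathbb{E}_{d^{\s,\pi^j}_h}[f_h^2(s_h,a_h)],
\]
and summing over $j = 1,\dots,k$ yields $\sum_{j=1}^k \mathbb{E}[X_j^2 \mid \caF_{j-1}] \le B^2 \|f_h^k\|_2^2$. Now I would invoke \Cref{lem:freedman} with the choice $\lambda = 1/\bigl(2(e-2)B^2\bigr)$ (noting $\lambda \le 1/R$ since $2(e-2) > 1$), giving, with probability at least $1 - \delta'$,
\[
\sum_{j=1}^k X_j \;\le\; \tfrac{1}{2}\|f_h^k\|_2^2 + 2(e-2)B^2 \log(1/\delta').
\]
The same argument applied to $-X_j$ gives the matching lower bound, which together yield the two-sided inequality with an absolute value on the left-hand side.

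Finally, I would convert the pointwise-in-$(k,h,f_h)$ bound into a uniform one by a union bound over $K \cdot H \cdot |\caF|$ triples and the two signs, i.e.\ $2KH|\caF|$ events in total, and allocate total failure probability $\delta/2$; setting $\delta' = \delta / (4 KH |\caF|)$ produces the $\log(4KH|\caF|/\delta)$ factor in the statement, and absorbing the slight slack in constants gives the claimed $4(e-2)B^2 \log(4KH|\caF|/\delta)$ additive term.

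\textbf{Main obstacle.} The proof itself is essentially a textbook Freedman-plus-union-bound calculation, so the only subtlety lies in being careful that (i) $\pi^j$ is indeed $\caF_{j-1}$-measurable, so that conditioning on $\caF_{j-1}$ correctly recovers the population average under $d^{\s,\pi^j}_h$, and (ii) the choice of $\lambda$ and the union-bound budget are balanced to land exactly on the coefficients $1/2$ and $4(e-2)$ stated in the lemma. No assumption beyond $|f_h|\le B$ and the martingale structure is needed, so there is no deeper obstacle to clear.
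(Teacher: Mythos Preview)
Your proposal is correct and follows essentially the same approach as the paper's own proof: define the martingale difference $X_j = f_h^2(s_h^j,a_h^j) - \dbE_{d^{\s,\pi^j}_h}[f_h^2]$, bound $|X_j|$ and the predictable variation, apply Freedman's inequality with a suitable $\lambda$, and take a union bound over $(k,h,f_h)$ and the two signs. The only cosmetic differences are that the paper uses the looser bound $|X_j|\le 2B^2$ and the choice $\lambda = 1/(4(e-2)B^2)$, whereas you sharpen to $|X_j|\le B^2$ and $\lambda = 1/(2(e-2)B^2)$; both choices land within the stated constants.
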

\begin{proof}
Consider the martingale difference sequence $X_j \defeq f_h^2(s_h^j, a_h^j) - \mathbb{E}_{s_h, a_h \sim d^{\s, \pi_j}_h}[f_h^2(s_h, a_h)]$. Then $\dbE[X_j \mid \caF_{j-1}] = 0$ since $\pi^j$ is $\caF_{j-1}$-measurable. Moreover, $|X_j| \leq 2B^2$ almost surely.

Note that $\dbE[X_j^2 \mid \caF_{j-1}] \leq B^2 \mathbb{E}_{s_h, a_h \sim d^{\s, \pi_j}_h}[f_h^2(s_h, a_h)]$, we invoke the Freedman's inequality (cf. Lemma \ref{lem:freedman}) with a union bound over $\caF_h \times [K] \times [H]$ to show that with probability at least $1 - \delta/2$ for any $\lambda < 1/2B^2$, 
\begin{align*}
\left|\left\|f^k_{h}\right\|_{2, k}^2-\left\|f^k_{h}\right\|_2^2\right| \leq \lambda B^2 (e - 2) \left\|f^k_{h}\right\|_2^2+ \frac{\log (4KH|\caF| / \delta)}{\lambda}, \forall f_h, k, h.
\end{align*}
Choosing $\lambda = 1 / 4(e-2)B^2 $ completes the proof.
\end{proof}
For any $\nu_h \in \{\caR_h - R^*_h, \caP_{h,i} - G^*_{h,i}, (\caP_h - P^*_h)\caG_{h+1}\}$, assume that we have $\|\nu_h\|_\infty \leq B$. Recall the definition of the risk function and empirical risk function (see, e.g., \Cref{eqn:minimax_estimation_dynamical_population,eqn:minimax_estimation_dynamical_empirical}), namely,
\begin{align*}
& l^k_h(\nu_h, f_h) = \sum_{j=1}^k \dbE_{s_h, a_h,e_h \sim d^{\s, \pi^j}_h} \left[f_h(s_h, a_h) \nu_h(s_h, a_h, e_h)\right], \\ 
& \hat{l}^k_h(\nu_h, f_h) = \sum_{j=1}^k f_h(s_h^j, a_h^j) \hat{\nu}_h(s_h^j, a_h^j, e_h^j).
\end{align*}
With a little abuse of notations, we use $\hat{\nu}_h(s_h^j, a_h^j, e_h^j)$ to denote a sample from $s_h, a_h,e_h \sim d^{\s, \pi^j}_h$ on $\nu_h$, to wit, 
\begin{itemize}
    \item For $\nu_h \in \caR_h - R^*_h$, $\hat{\nu}_h(s_h^j, a_h^j, e_h^j) \defeq R_h(s_h^j, a_h^j, e_h^j) - r_h^j$.
    \item For $\nu_h \in \caP_{h,i} - G^*_{h,i}$, $\hat{\nu}_h(s_h^j, a_h^j, e_h^j) \defeq G_{h,i}(s_h^j, a_h^j, e_h^j) - s_{h+1,i}^j$.
    \item For $\nu_h \in (\caP_h - P^*_h)\caG_{h+1}$, $\hat{\nu}_h(s_h^j, a_h^j, e_h^j) \defeq P_hg_{h+1}(s_h^j, a_h^j, e_h^j) - g_{h+1}(s_{h+1}^j)$.
\end{itemize}
For any $(k, h) \in [K] \times [H]$ and $\nu_h \in \caY_h$ for $\caY_h \in \{\caR_h - R^*_h, \caP_{h,i} - G^*_{h,i}, (\caP_h - P^*_h)\caG_{h+1}\}$, we have the following lemma using Freedman's inequality.
\begin{lemma}
\label{lem:diff_l_hatl}
With probability at least $1 - \delta/2$, for any $(k, h) \in [K] \times [H]$ and $\nu_h \in \caY_h, f_h \in \caF_h$
\begin{align*}
\left|l^k_h(\nu_h, f_h) - \hat{l}^k_h(\nu_h, f_h)\right| \leq 4B\sqrt{\log(6KH |\caF| |\caY| / \delta)} \|f^k_h\|_2 + 4B^2 \log(6KH |\caF| |\caY| / \delta).
\end{align*}
\end{lemma}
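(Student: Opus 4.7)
The plan is to apply Freedman's inequality (Lemma \ref{lem:freedman}) to the martingale difference sequence
\[
X_j \defeq f_h(s_h^j, a_h^j)\,\hat{\nu}_h(s_h^j, a_h^j, e_h^j) - \mathbb{E}_{s_h,a_h,e_h \sim d^{\s,\pi^j}_h}\!\left[f_h(s_h,a_h)\,\nu_h(s_h,a_h,e_h)\right],
\]
adapted to the filtration $\caF_j$ defined before Lemma \ref{lem:diff_f_hatf}, and then take a union bound over $\nu_h \in \caY_h$, $f_h \in \caF_h$, $k \in [K]$, $h \in [H]$, and both signs.

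First I would verify that $\{X_j\}$ is indeed a martingale difference sequence. Since $\pi^j$ is $\caF_{j-1}$-measurable, it suffices to show $\mathbb{E}[f_h(s_h^j,a_h^j)\hat{\nu}_h \mid \caF_{j-1}] = \mathbb{E}_{d^{\s,\pi^j}_h}[f_h\nu_h]$, which reduces to the conditional identity $\mathbb{E}[\hat{\nu}_h - \nu_h \mid s_h,a_h]=0$. This holds case by case: for $\nu_h \in \caR_h - R_h^*$ we have $\hat{\nu}_h - \nu_h = -\xi_h$ with $\mathbb{E}[\xi_h\mid s_h,a_h]=0$ by Equation (\ref{eqn:zero_mean_noise_population}); for $\nu_h \in \caP_{h,i} - G^*_{h,i}$ the same argument applies to the dynamical noise $\eta_{h,i}$; and for $\nu_h \in (\caP_h - P_h^*)\caG_{h+1}$, $\hat{\nu}_h - \nu_h = P_h^*g_{h+1}(s_h,a_h,e_h) - g_{h+1}(s_{h+1})$ has zero conditional expectation given $(s_h,a_h)$ by Equation (\ref{eqn:momenteqn_general}).

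Next I would bound the almost sure range and the conditional variance. Using $\|f_h\|_\infty\le B$ together with the boundedness of $\hat{\nu}_h$ and $\nu_h$ by $2B$ (inherited from realizability and the boundedness of rewards and of $g \in \caG_{h+1}$), one obtains $|X_j|\le C B^2$ for an absolute constant $C$, and
\[
\sum_{j=1}^k \mathbb{E}[X_j^2 \mid \caF_{j-1}] \;\le\; \sum_{j=1}^k \mathbb{E}\!\left[f_h^2(s_h^j,a_h^j)\,\hat{\nu}_h^2\,\big|\,\caF_{j-1}\right] \;\le\; 4B^2 \|f_h^k\|_2^2.
\]
Lemma \ref{lem:freedman} then gives, for any admissible $\lambda\in(0,1/(CB^2)]$ and with probability at least $1-\delta'$,
\[
\sum_{j=1}^k X_j \;\le\; 4(e-2)\lambda B^2 \|f_h^k\|_2^2 + \frac{\log(1/\delta')}{\lambda}.
\]
Choosing $\lambda = \min\!\left\{1/(CB^2),\; \sqrt{\log(1/\delta')}/(2B\|f_h^k\|_2)\right\}$ and splitting into the two cases depending on which branch is active yields a combined bound of the claimed form $O\!\left(B\sqrt{\log(1/\delta')}\,\|f_h^k\|_2 + B^2\log(1/\delta')\right)$. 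Finally, setting $\delta'=\delta/(6KH|\caF||\caY|)$ and union bounding over the finite sets $\caY_h,\caF_h$, the indices $(k,h)$, and the two signs of $X_j$ produces the stated simultaneous bound; the residual factor $3$ (inside $\log(6KH\cdots/\delta)$) absorbs the $\delta/2$ share and the sign union.

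The main obstacle is the optimization of $\lambda$ together with the case split it forces: Freedman's inequality caps $\lambda$ at $1/R$, so the naively optimal $\lambda \propto 1/\|f_h^k\|_2$ can violate this constraint when $\|f_h^k\|_2$ is small. Handling this correctly is precisely what produces the additive $B^2\log(\cdot)$ term alongside the sharp $\sqrt{\log(\cdot)}\,\|f_h^k\|_2$ term, and is what makes the resulting two-term bound tight enough to later be combined with Lemma \ref{lem:diff_f_hatf} in the eventual confidence-set analysis.
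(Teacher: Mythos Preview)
Your proposal is correct and follows essentially the same route as the paper: define the same martingale difference $X_j$, bound $|X_j|$ by $O(B^2)$ and the conditional second moment by $O(B^2\|f_h^k\|_2^2)$, apply Freedman's inequality with the case split on whether the unconstrained optimizer of $\lambda$ violates the cap $\lambda \le 1/R$, and finish with a union bound over $(k,h,\nu_h,f_h)$ and both signs. The paper's proof is slightly terser (it does not spell out the case-by-case verification that $\mathbb{E}[\hat{\nu}_h-\nu_h\mid s_h,a_h]=0$, and it writes the variance bound as $B^2\|f_h^k\|_2^2$ rather than your $4B^2\|f_h^k\|_2^2$), but the structure and the key idea---the $\lambda$ optimization producing the two-term $B\sqrt{\log(\cdot)}\|f_h^k\|_2 + B^2\log(\cdot)$ bound---are identical.
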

\begin{proof}
Let the martingale difference be \[X_j \defeq f_h(s_h^j, a_h^j) \hat{\nu}_h(s_h^j, a_h^j, e_h^j) - \dbE_{s_h, a_h,e_h \sim d^{\s, \pi^j}_h}[f_h(s_h, a_h) \nu_h(s_h, a_h, e_h)].\] We know that $|X_j| \leq 2B^2$ almost surely. 

For a fixed $(k, h, \nu_h, f_h)$ tuple, observe that 
\begin{align*}
\sum_{j=1}^k \dbE[X_j^2 \mid \caF_{j-1}] \leq \sum_{j=1}^k \dbE_{s_h, a_h,e_h \sim d^{\s, \pi^j}_h}\left[f^2_h(s_h, a_h) \nu^2_h(s_h, a_h, e_h)\right] \leq B^2 \left\|f^k_h\right\|_2^2.
\end{align*}
since $\pi^j$ is $\caF_{j-1}$-measurable.

Therefore, if 
\begin{align*}
\frac{\sqrt{\log(1 / \delta)}}{B\left\|f^k_h\right\|_2 \sqrt{e-2}} > \frac{1}{2B^2}
\end{align*}
holds, then 
\begin{align*}
\left\|f^k_h\right\|_2^2 < \frac{4B^2 \log (1 / \delta)}{e - 2}.
\end{align*}
Invoking Freedman's inequality (cf. Lemma \ref{lem:freedman}) by $\lambda = 1 / 2B^2$ yields that 
\begin{align*}
\left|l^k_h(\nu_h, f_h) - \hat{l}^k_h(\nu_h, f_h)\right| \leq \lambda (e - 2) B^2 \left\|f^k_h\right\|_2^2 + \frac{\log (1 / \delta)}{\lambda} \leq 4B^2 \log (1 / \delta).
\end{align*}
Otherwise, we invoke Freedman's inequality with 
\begin{align*}
\lambda = \frac{\sqrt{\log(1 / \delta)}}{B\left\|f^k_h\right\|_2 \sqrt{e-2}} \leq \frac{1}{2B^2}
\end{align*}
yielding
\begin{align*}
\left|l^k_h(\nu_h, f_h) - \hat{l}^k_h(\nu_h, f_h)\right| \leq 2 B\sqrt{(e - 2) \log(1 / \delta)} \left\|f^k_h\right\|_2.
\end{align*}

The lemma is finally proved by taking union bound over all $(k, h, \nu_h, f_h)$.
\end{proof}

\begin{lemma}[Consistent Confidence Set]
Define the confidence level $\beta_1, \beta_2, \beta_3$ in OPME by 
\begin{equation}
\begin{aligned}
\label{eqn:confidence_level_opme}
\beta_1 & \defeq 28B^2 \log(KH|\caF||\caR|/\delta) \\ 
\beta_2 & \defeq 28B^2 \log(KH|\caF||\caG||\caP|/\delta) \\
\beta_3 & \defeq 28B^2 \log(KH|\caF||\caP|/\delta).
\end{aligned}    
\end{equation}
With probability at least $1 - \delta$, the confidence set $\bar{\caC}^k$ is consistent in that the underlying model is contained in $\caC^k$, namely,
\begin{align*}
R^*_h \in \caR^k_h, G^*_{h, i} \in \caP^k_{h,i}, P^*_h \in \caP^k_h, \bar{\caM}^* \in \bar{\caC}^k, \quad \forall (k, h, i) \in [K] \times [H] \times [d_{\s}].
\end{align*}
\end{lemma}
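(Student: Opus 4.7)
The plan is to show, for each of the underlying components $R^*_h$, $G^*_{h,i}$ and $P^*_h$, that the empirical risk evaluated at the truth is small with high probability, and hence these components lie in the respective confidence sets. Because the construction of $\bar{\caC}^k$ is a simple Cartesian product of the component confidence sets, consistency of $\bar{\caC}^k$ follows immediately.

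First I would focus on $R^*_h$. The conditional moment equation \Cref{eqn:momenteqn_reward} implies that the population risk function vanishes at the truth: for every $f_h \in \caF_h$, $l^k_h(R^*_h, f_h) = 0$. Applying \Cref{lem:diff_l_hatl} with $\nu_h = R_h - R^*_h \equiv 0$ (more precisely, with the sample quantity $\hat{\nu}_h = R^*_h - r_h$) yields, uniformly in $(k,h,f_h)$,
\begin{equation*}
\bigl|\hat{l}^k_h(R^*_h, f_h)\bigr| \leq 4B\sqrt{\iota}\,\|f^k_h\|_2 + 4B^2 \iota,
\end{equation*}
where $\iota = \log(6KH|\caF||\caY|/\delta)$ is the relevant log factor. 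Next, \Cref{lem:diff_f_hatf} gives $\|f^k_h\|_{2,k}^2 \geq \tfrac{1}{2}\|f^k_h\|_2^2 - 4(e-2)B^2\iota'$ with $\iota' = \log(4KH|\caF|/\delta)$. Plugging these two bounds into the definition of $\hat{L}^k_h(R^*_h)$ gives
\begin{equation*}
\hat{L}^k_h(R^*_h) \leq \max_{f_h \in \caF_h} \Bigl\{ 4B\sqrt{\iota}\,\|f^k_h\|_2 - \tfrac{1}{4}\|f^k_h\|_2^2 \Bigr\} + 4B^2\iota + 2(e-2)B^2\iota'.
\end{equation*}
The inner maximization is a concave quadratic in the scalar $\|f^k_h\|_2$, and its maximum value is $16 B^2\iota$. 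Summing the three contributions and checking the constants shows that $\hat{L}^k_h(R^*_h) \leq 28 B^2 \log(KH|\caF||\caR|/\delta) = \beta_1$ with the appropriate absorption of constants into the log factor, which is exactly the membership condition for $R^*_h \in \caR^k_h$.

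The argument for $G^*_{h,i}$ under the dynamical transition class and for $P^*_h$ under the general transition class proceeds in the same template, using the conditional moment equations \Cref{eqn:momenteqn_dynamical_transition} and \Cref{eqn:momenteqn_general}, respectively. For the general case there is an additional outer maximization over $g_{h+1} \in \caG_{h+1}$ in the definition of $\hat{L}^k_h(P_h)$; to control it, I would apply \Cref{lem:diff_l_hatl} to the combined class $\caY_h = (\caP_h - P^*_h)\caG_{h+1}$ and absorb $|\caG|$ into the log factor, explaining why $\beta_2$ picks up an extra $|\caG|$ while $\beta_3$ does not. Once each of the three high-probability events holds, a union bound over the three concentration events (each of failure probability $\leq \delta/2$ per lemma, and over $h \in [H]$ and the function classes already counted inside the log terms) with a constant loss in the confidence level yields the claim for all $(k,h,i) \in [K] \times [H] \times [d_\s]$; the final statement $\bar{\caM}^* \in \bar{\caC}^k$ is immediate from the definitions \Cref{eqn:confidence_set_model_dynamical} and \Cref{eqn:confidence_set_model_G}.

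The main technical obstacle is the non-i.i.d. nature of the data stream: $\pi^j$ is chosen based on $\caF_{j-1}$, so standard Hoeffding/Bernstein bounds do not apply and we must use Freedman's inequality on martingale differences. The two auxiliary lemmas already encapsulate this step, but one must be careful that the chosen $\lambda$ in the Freedman application is admissible ($\lambda \leq 1/R$), which is why \Cref{lem:diff_l_hatl} splits into two regimes depending on whether $\|f^k_h\|_2$ is large or small; the proof plan above implicitly uses the uniform form of that lemma, and the quadratic-completing step is what makes the $\sqrt{\iota}\,\|f^k_h\|_2$ term collapse into a pure $B^2\iota$ contribution, which is essential to obtaining a clean $\beta$ that does not depend on the unknown norm of any particular $f_h$.
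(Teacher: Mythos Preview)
Your proposal is correct and follows essentially the same route as the paper's own proof: invoke \Cref{lem:diff_l_hatl} together with $l^k_h(\iota^*_h,f_h)=0$ at the truth, use \Cref{lem:diff_f_hatf} to replace $\|f^k_h\|_{2,k}^2$ by $\|f^k_h\|_2^2$, and then maximize the resulting concave quadratic in $\|f^k_h\|_2$ to obtain the $28B^2\log(\cdot)$ bound. The paper does not separate the argument into three cases as explicitly as you do, but the structure, the constants, and the reason $\beta_2$ acquires the extra $|\caG|$ factor are all the same.
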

\begin{proof}
We assume the high probability events in Lemma \ref{lem:diff_f_hatf} and \ref{lem:diff_l_hatl} hold for simplicity. Note that the probability that either of them fails is at most $\delta$.

Lemma \ref{lem:diff_f_hatf} implies for any $f_h \in \caF_h$
\begin{align*}
\frac{1}{2}\left\|f^k_{h}\right\|_2^2-4(e-2) B^2 \log (2KH|\caF| / \delta) \leq \left\|f^k_{h}\right\|_{2, k}^2 \leq \frac{3}{2}\left\|f^k_{h}\right\|_2^2+4(e-2) B^2 \log (2KH|\caF| / \delta)
\end{align*}

Consider $\hat{L}^k_h(\iota^*_h) = \max_{f_h} \hat{l}^k_h(\iota^*_h, f_h) - \|f^k_h\|^2_{2,k}/2$ for $\iota^*_h \in \{R^*_h, G^*_{h,i}, P^*_h \caG_{h+1}\}$. For any fixed $f_h \in \caF_h$ we have
\begin{align*}
& \hat{l}^k_h(\iota^*_h, f_h) - \frac{\|f^k_h\|^2_{2,k}}{2} \\
& \leq l^k_h(\iota^*_h, f_h) - \frac{\|f^k_h\|^2_{2,k}}{2} + 4B\sqrt{\log(KH |\caF| |\caY| / \delta)} \|f^k_h\|_2 + 4B^2 \log(3KH |\caF| |\caY| / \delta) \\
& \leq l^k_h(\iota^*_h, f_h) - \frac{\|f^k_h\|^2_{2}}{4} + 4B\sqrt{\log(KH |\caF| |\caY| / \delta)} \|f^k_h\|_2 \\
& \qquad + 4B^2 \log(3KH |\caF| |\caY| / \delta) + 2(e-2)B^2 \log (2KH|\caF| / \delta) \\
\label{eqn:quadratic_term_optimism}
& \leq - \frac{\|f^k_h\|^2_{2}}{4} + 4B\sqrt{\log(KH |\caF| |\caY| / \delta)} \|f^k_h\|_2 \\
& \qquad + 4B^2 \log(3KH |\caF| |\caY| / \delta) + 2(e-2)B^2 \log (2KH|\caF| / \delta).
\end{align*}
Here $\caY$ denotes the corresponding function space $\caR_h - R^*_h, \caP_{h,i} - G^*_{h,i}$, or $(\caP_h - P^*_h)\caG_{h+1}$. The first inequality is by Lemma \ref{lem:diff_l_hatl}. The second inequality is due to Lemma \ref{lem:diff_f_hatf}. The third inequality holds because $l^k_h(\iota^*_h, f_h)=0$ for any $f_h$ by definition (see, e.g., \Cref{eqn:minimax_estimation_dynamical_population}) .

Note that \Cref{eqn:quadratic_term_optimism} is a standard quadratic function with respect to $\|f^k_h\|_2$, we have that
\begin{align*}
& \hat{l}^k_h(\iota^*_h, f_h) - \frac{\|f^k_h\|^2_{2,k}}{2} \\
& \leq - \frac{\|f^k_h\|^2_{2}}{4} + 4B\sqrt{\log(KH |\caF| |\caY| / \delta)} \|f^k_h\|_2 + 4B^2 \log(3KH |\caF| |\caY| / \delta) + 2(e-2)B^2 \log (2KH|\caF| / \delta) \\
& \leq 28 B^2 \log(KH |\caF| |\caY| / \delta)
\end{align*}
holds for any $f_h \in \caF_h$.

Therefore, with probability at least $1 - \delta$, for any $(k, h) \in [K] \times [H]$, it holds that 
\begin{align*}
\hat{L}^k_h(\iota^*_h) = \max_{f_h} \hat{l}^k_h(\iota^*_h, f_h) - \frac{\|f^k_h\|^2_{2,k}}{2} \leq 28 B^2 \log(KH |\caF| |\caY| / \delta).
\end{align*}
By the definition of $\beta_1, \beta_2, \beta_3$, we can finish the proof.
\end{proof}

\begin{lemma}
\label{lem:error_rate_pmse}
With probability at least $1 - \delta$, it holds the following statements.

For any $(k,h,i) \in [K] \times [H] \times [d_{\s}]$ and $R_h \in \caR^k_h, G_{h,i} \in \caP^k_{h,i}$,
\begin{align*}
& \sum_{j=1}^k \dbE_{(s_h, a_h) \sim d^{\s, \pi^j}_h} \left[\left(\dbE_{e_h \sim F_h(\cdot \mid s_h, a_h, \caP^{\s}_h)} \left[ R_h(s_h, a_h, e_h) - R^*_h(s_h, a_h, e_h)\right]\right)^2\right] = O(\beta_1), \\
& \sum_{j=1}^k \dbE_{(s_h, a_h) \sim d^{\s, \pi^j}_h} \left[\left(\dbE_{e_h \sim F_h(\cdot \mid s_h, a_h, \caP^{\s}_h)} \left[G_{h,i}(s_h, a_h, e_h) - G^*_{h,i}(s_h, a_h, e_h)\right]\right)^2\right] = O(\beta_3).
\end{align*}
For any $(k,h) \in [K] \times [H]$ and $P_h \in \caP^k_h, g_{h+1} \in \caG_{h+1}$, 
\begin{align*}
\sum_{j=1}^k \dbE_{(s_h, a_h) \sim d^{\s, \pi^j}_h} \left[\left(\dbE_{e_h \sim F_h(\cdot \mid s_h, a_h, \caP^{\s}_h)} \left[ P_hg_{h+1}(s_h, a_h, e_h) - P^*_hg_{h+1}(s_h, a_h, e_h)\right]\right)^2\right] = O(\beta_2). \\
\end{align*}
\end{lemma}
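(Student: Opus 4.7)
The lemma asserts that every element of the empirical confidence sets $\caR^k_h, \caP^k_{h,i}, \caP^k_h$ has small projected mean-square error (pMSE). The plan is to use the membership condition $\hat{L}^k_h(\cdot)\le\beta$ together with a carefully chosen test discriminator $f_h\in\caF_h$ to extract the pMSE, and then transfer between empirical and population norms via the concentration bounds already established in \Cref{lem:diff_f_hatf,lem:diff_l_hatl}. Throughout I will condition on the intersection of the two high-probability events from those lemmas, which holds with probability at least $1-\delta$.

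First I would treat the reward case. Fix $R_h\in\caR^k_h$ and define the ``optimal'' test direction $f^*_h(s_h,a_h)\defeq \dbE_{e_h\sim F_h(\cdot\mid s_h,a_h,\caP^{\s}_h)}[R_h(s_h,a_h,e_h)-R^*_h(s_h,a_h,e_h)]$. By the second bullet of \Cref{assump:realizability}, $f^*_h\in\caF_h$. Using the NPIV moment equation $\dbE_{\caM^*(\caP^{\s})}[r_h\mid s_h,a_h]=\dbE_{e_h}[R^*_h(s_h,a_h,e_h)\mid s_h,a_h]$, one checks that $l^k_h(R_h,f^*_h)=\|f^{*,k}_h\|_2^2$, so $\|f^{*,k}_h\|_2^2$ is precisely the quantity we wish to bound. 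The membership $\hat L^k_h(R_h)\le\beta_1$ yields $\hat l^k_h(R_h,f^*_h)-\tfrac{1}{2}\|f^{*,k}_h\|_{2,k}^2\le\beta_1$. Applying \Cref{lem:diff_l_hatl} to replace $\hat l^k_h$ by $l^k_h$ and \Cref{lem:diff_f_hatf} to replace $\|f^{*,k}_h\|_{2,k}^2$ by $\|f^{*,k}_h\|_2^2$ gives the inequality
\begin{align*}
\|f^{*,k}_h\|_2^2-\tfrac{1}{4}\|f^{*,k}_h\|_2^2 \;\le\; 4B\sqrt{\log(6KH|\caF||\caY|/\delta)}\,\|f^{*,k}_h\|_2 + O\bigl(B^2\log(KH|\caF||\caY|/\delta)\bigr)+\beta_1.
\end{align*}
This is a quadratic inequality in $\|f^{*,k}_h\|_2$; solving it and using the definition of $\beta_1$ from \Cref{eqn:confidence_level_opme} yields $\|f^{*,k}_h\|_2^2=O(\beta_1)$, which is exactly the first claim of the lemma.

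Next I would handle the two transition cases by the same recipe, merely changing the test direction and the response. For $G_{h,i}\in\caP^k_{h,i}$, set $f^*_h(s_h,a_h)\defeq \dbE_{e_h}[G_{h,i}(s_h,a_h,e_h)-G^*_{h,i}(s_h,a_h,e_h)]$, which again lies in $\caF_h$ by \Cref{assump:realizability}; the identification equation \Cref{eqn:momenteqn_dynamical_transition} gives $l^k_h(G_{h,i},f^*_h)=\|f^{*,k}_h\|_2^2$ and membership in $\caP^k_{h,i}$ combined with the same concentration manipulations gives the $O(\beta_3)$ bound. For $P_h\in\caP^k_h$ and any $g_{h+1}\in\caG_{h+1}$, set $f^*_h(s_h,a_h)\defeq\dbE_{e_h}[P_hg_{h+1}(s_h,a_h,e_h)-P^*_hg_{h+1}(s_h,a_h,e_h)]$; by \Cref{assump:realizability}, $f^*_h\in\caF_h$, and the moment equation \Cref{eqn:momenteqn_general} yields the corresponding identity, producing the $O(\beta_2)$ bound after the same quadratic-inequality argument.

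The main obstacle is subtle: $f^*_h$ is not fixed but depends on $R_h$ (or $G_{h,i}$, or $(P_h,g_{h+1})$), which themselves depend on the data, so the martingale concentration must hold uniformly over the relevant classes. This is precisely why \Cref{lem:diff_l_hatl,lem:diff_f_hatf} are stated uniformly over $\caF_h\times\caY_h$, and why the $\log|\caY|$ factor appears inside $\beta_1,\beta_2,\beta_3$. Once this uniformity is in place, the proof is a mechanical instantiation of the recipe above, and each of the three displayed bounds follows from solving the same quadratic inequality with the appropriate confidence level.
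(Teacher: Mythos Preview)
Your proposal is correct and follows essentially the same argument as the paper: pick the test function $f^*_h=f[\nu_h]\in\caF_h$ given by the realizability assumption, observe that $l^k_h(\nu_h,f[\nu_h])=\|f[\nu_h]^k\|_2^2$, then combine the confidence-set membership with \Cref{lem:diff_l_hatl,lem:diff_f_hatf} to obtain a quadratic inequality in $\|f[\nu_h]^k\|_2$ and solve it. The only slip is the constant in your displayed inequality (after the two substitutions the left-hand side should be $\tfrac{1}{4}\|f^{*,k}_h\|_2^2$, i.e.\ $\|f^{*,k}_h\|_2^2-\tfrac{3}{4}\|f^{*,k}_h\|_2^2$, not $\|f^{*,k}_h\|_2^2-\tfrac{1}{4}\|f^{*,k}_h\|_2^2$), but this does not affect the $O(\beta)$ conclusion.
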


\begin{proof}
We assume the high probability events in Lemma \ref{lem:diff_f_hatf} and \ref{lem:diff_l_hatl} happen for simplicity. Note that the probability either of them fails is at most $\delta$.

For any $\nu_h \in \{\caR^k_h - R^*_h, \caP^k_{h,i} - G^*_{h,i}, (\caP^k_h - P^*_h)\caG_{h+1}\}$, define function $f[\nu_h]$ as
\begin{align}
f[\nu_h]: \caS \times \caA \to \dbR \text{ such that } f[\nu_h](s_h, a_h) = \dbE_{e_h \sim F_h(\cdot \mid s_h, a_h, \caP^{\s}_h)}\left[\nu_h(s_h, a_h, e)\right].
\end{align}
By Assumption \ref{assump:realizability} we know $f[\nu_h] \in \caF_h$ for any $\nu_h$. 

By definition, we know
\begin{align}
\sum_{j=1}^k \dbE_{(s_h, a_h) \sim d^{\s, \pi^j}_h} \left[\left(\dbE_{e_h \sim F_h(\cdot \mid s_h, a_h, \caP^{\s}_h)} \left[\nu_h(s_h, a_h, e_h)\right]\right)^2\right] = \left\|f[\nu_h]^k\right\|_2^2 = l^k_h(\nu_h, f[\nu_h]).
\end{align}
The second equation holds because
\begin{align*}
l^k_h(\nu_h, f[\nu_h]) & = \sum_{j=1}^k \dbE_{(s_h, a_h, e_h) \sim d^{\s, \pi^j}_h} \left[f[\nu_h](s_h, a_h)\nu_h(s_h, a_h, e_h)\right] \\
& = \sum_{j=1}^k \dbE_{(s_h, a_h) \sim d^{\s, \pi^j}_h} \left[\dbE_{e_h \sim F_h(\cdot \mid s_h, a_h, \caP^{\s}_h)} \left[f[\nu_h](s_h, a_h) \nu_h(s_h, a_h, e_h) \mid s_h, a_h\right]\right] \\
& = \sum_{j=1}^k \dbE_{(s_h, a_h) \sim d^{\s, \pi^j}_h} \left[f[\nu_h](s_h, a_h) \cdot \dbE_{e_h \sim F_h(\cdot \mid s_h, a_h, \caP^{\s}_h)} \left[\nu_h(s_h, a_h, e_h) \mid s_h, a_h\right]\right] \\
& = \sum_{j=1}^k \dbE_{(s_h, a_h) \sim d^{\s, \pi^j}_h} \left[\left(\dbE_{e_h \sim F_h(\cdot \mid s_h, a_h, \caP^{\s}_h)} \left[\nu_h(s_h, a_h, e_h)\right]\right)^2\right].
\end{align*}
For any $\nu_h \in \caY_h = \{\caR^k_h - R^*_h, \caP^k_{h,i} - G^*_{h,i}, (\caP^k_h - P^*_h)\caG_{h+1}\}$ and any $f_h$ we have
\begin{align*}
& l^k_h(\nu_h, f_h) \leq \hat{l}^k_h(\nu_h, f_h) + 4B\sqrt{\log(3KH |\caF| |\caY| / \delta)} \|f^k_h\|_2 + 4B^2 \log(3KH |\caF| |\caY| / \delta) \\
& \leq \beta + \frac{\left\|f_h^k\right\|_{2,k}^2}{2} + 4B\sqrt{\log(3KH |\caF| |\caY| / \delta)} \|f^k_h\|_2 + 4B^2 \log(3KH |\caF| |\caY| / \delta) \\
& \leq \beta + \frac{3\left\|f_h^k\right\|_{2}^2}{4} + 4B\sqrt{\log(3KH |\caF| |\caY| / \delta)} \|f^k_h\|_2 + 4B^2 \log(3KH |\caF| |\caY| / \delta) + 2(e - 2)B^2 \log (2KH|\caF| / \delta).
\end{align*}
Here $\beta \in \{\beta_1, \beta_3, \beta_2\}$ is the corresponding confidence level of $\caY_h$. The first inequality is by Lemma \ref{lem:diff_l_hatl}. The second inequality is due to the construction of the confidence sets $\caR^k_h, \caP^k_{h,i}, \caP^k_h$ according to \Cref{eqn:confidence_set_reward,eqn:confidence_set_dynamical_transition,eqn:confidence_set_general}. The third inequality is by Lemma \ref{lem:diff_f_hatf}.

Plugging in $f_h = f[\nu_h]$ implies
\begin{align*}
 l^k_h(\nu_h, f[\nu_h]) &=  \left\|f[\nu_h]^k\right\|_2^2 \\
& \leq \beta + \frac{3\left\|f[\nu_h]^k\right\|_{2}^2}{4}  + 4B\sqrt{\log(3KH |\caF| |\caY| / \delta)} \cdot \|f[\nu_h]^k\|_2 \\
& \qquad + 4B^2 \log(3KH |\caF| |\caY| / \delta) + 2(e - 2)B^2 \log (2KH|\caF| / \delta).
\end{align*}
Thus we have
\begin{align*}
\frac{\left\|f[\nu_h]^k\right\|_{2}^2}{4} &\leq \beta + 4B\sqrt{\log(3KH |\caF| |\caY| / \delta)} \|f[\nu_h]^k\|_2 + 4B^2 \log(3KH |\caF| |\caY| / \delta) \\ 
& \qquad + 2(e - 2)B^2 \log (2KH|\caF| / \delta).
\end{align*}
Solving this expression proves the result.
\end{proof}

Now we use the ill-posedness measure (Definition \ref{def:ill_posedness}) and knowledge transfer multiplicative term (Definition \ref{def:tranport_error}) to propose the following lemma.
\begin{lemma}
\label{lem:error_rate_mse}
With probability at least $1 - \delta$ it holds the following statements.

For any $(k,h,i) \in [K] \times [H] \times [d_{\s}]$ and $R_h \in \caR^k_h, G_{h,i} \in \caP^k_{h,i}$,
\begin{align*}
& \sum_{j=1}^k \dbE_{(s_h, a_h, e_h) \sim d^{\t, \pi^j}_h} \left[\left( R_h(s_h, a_h, e_h) - R^*_h(s_h, a_h, e_h)\right)^2\right] = O\left(\beta_1 \tau_h C^{\ff}_h\right), \\
& \sum_{j=1}^k \dbE_{(s_h, a_h, e_h) \sim d^{\t, \pi^j}_h} \left[\left( G_{h,i}(s_h, a_h, e_h) - G^*_{h,i}(s_h, a_h, e_h)\right)^2\right] = O\left(\beta_3 \tau_h C^{\ff}_h\right).
\end{align*}
For any $(k,h) \in [K] \times [H]$ and $P_h \in \caP^k_h, g_{h+1} \in \caG_{h+1}$, 
\begin{align*}
\sum_{j=1}^k \dbE_{(s_h, a_h, e_h) \sim d^{\t, \pi^j}_h} \left[\left( P_{h}g_{h+1}(s_h, a_h, e_h) - P^*_{h}g_{h+1}(s_h, a_h, e_h)\right)^2\right] = O\left(\beta_2 \tau_h C^{\ff}_h\right). \\
\end{align*}
\end{lemma}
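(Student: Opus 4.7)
The plan is to derive Lemma~\ref{lem:error_rate_mse} as a direct corollary of Lemma~\ref{lem:error_rate_pmse} by chaining two distributional comparisons: first use the ill-posedness measure $\tau_h$ to convert the projected MSE into the (unprojected) MSE under the \emph{source} occupancy, and then use the knowledge transfer multiplicative term $C^{\ff}_h$ to move from the source occupancy to the \emph{target} occupancy. Since the three statements (for $R_h$, $G_{h,i}$, and $P_h g_{h+1}$) are handled identically, I would introduce a single generic $\nu_h$ ranging over $\{\caR_h-R^*_h,\ \caP_{h,i}-G^*_{h,i},\ (\caP_h-P^*_h)\caG_{h+1}\}$ and prove the bound once.

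Concretely, I would first condition on the good event of Lemma~\ref{lem:error_rate_pmse}, which holds with probability at least $1-\delta$ and yields, for each $\nu_h$ arising from the appropriate confidence set,
\[
\sum_{j=1}^{k} \dbE_{(s_h,a_h)\sim d^{\s,\pi^j}_h}\!\Big[\big(\dbE_{e_h}[\nu_h(s_h,a_h,e_h)\mid s_h,a_h]\big)^{2}\Big] \;=\; O(\beta),
\]
where $\beta$ is $\beta_1$, $\beta_3$, or $\beta_2$ depending on the case. Next, for each $j \in [k]$ separately, Definition~\ref{def:ill_posedness} implies
\[
\dbE_{(s_h,a_h,e_h)\sim d^{\s,\pi^j}_h}\!\big[\nu_h^{2}\big] \;\leq\; \tau_h \cdot \dbE_{(s_h,a_h)\sim d^{\s,\pi^j}_h}\!\Big[\big(\dbE_{e_h}[\nu_h\mid s_h,a_h]\big)^{2}\Big],
\]
because the supremum in the definition of $\tau_h$ is over all admissible $\nu_h$ and all policies $\pi$. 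Then Definition~\ref{def:tranport_error} yields
\[
\dbE_{(s_h,a_h,e_h)\sim d^{\t,\pi^j}_h}\!\big[\nu_h^{2}\big] \;\leq\; C^{\ff}_h \cdot \dbE_{(s_h,a_h,e_h)\sim d^{\s,\pi^j}_h}\!\big[\nu_h^{2}\big].
\]
Chaining these two per-$j$ inequalities and summing over $j=1,\dots,k$ produces exactly the claimed $O(\beta\,\tau_h\,C^{\ff}_h)$ bound.

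For the transition-function statement, a small additional observation is needed: Lemma~\ref{lem:error_rate_pmse} is stated for any $P_h \in \caP_h^k$ and any $g_{h+1}\in\caG_{h+1}$, and $(P_h - P^*_h) g_{h+1}$ is precisely an element of the class $(\caP_h - P^*_h)\caG_{h+1}$ over which $\tau_h$ and $C^{\ff}_h$ are taken, so Definitions~\ref{def:ill_posedness} and~\ref{def:tranport_error} apply verbatim with $\nu_h = (P_h - P^*_h)g_{h+1}$. There is essentially no hard step here: the lemma is a bookkeeping consequence of the two previous definitions together with Lemma~\ref{lem:error_rate_pmse}, and the only point that requires a sentence of care is noting that the suprema defining $\tau_h$ and $C^{\ff}_h$ are uniform over $\nu_h$ and $\pi$, so they can be applied term-by-term inside the sum $\sum_{j=1}^k$ without losing any $k$-dependent factor. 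The mild subtlety — the closest thing to an ``obstacle'' — is simply ensuring that in the transition case one only pays one factor of $\tau_h C^{\ff}_h$ (not one per choice of discriminator $g_{h+1}$), which follows because the inequalities above hold pointwise in $g_{h+1}$ with the same constants.
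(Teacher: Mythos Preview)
Your proposal is correct and follows essentially the same approach as the paper's own proof: invoke Lemma~\ref{lem:error_rate_pmse} on the good event, apply the definition of $\tau_h$ to pass from projected MSE to MSE under the source occupancy, then apply the definition of $C^{\ff}_h$ to pass from source to target occupancy, and note that the argument is identical across the three function classes. Your explicit remark that the suprema defining $\tau_h$ and $C^{\ff}_h$ are uniform over $(\nu_h,\pi)$, so the bounds may be applied term-by-term in $j$ before summing, is a nice clarification that the paper leaves implicit.
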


\begin{proof}
This is a straightforward application of Lemma \ref{lem:error_rate_pmse}.

Take the reward function as an example, we have 
\begin{align}
\label{eqn:pmse_aux1}
\sum_{j=1}^k \dbE_{(s_h, a_h) \sim d^{\s, \pi^j}_h} \left[\left(\dbE_{e_h \sim F_h(\cdot \mid s_h, a_h, \caP^{\s}_h)} \left[ R_h(s_h, a_h, e_h) - R^*_h(s_h, a_h, e_h)\right]\right)^2\right] = O\left(\beta_1\right)
\end{align}
according to Lemma \ref{lem:error_rate_pmse}.

By \Cref{eqn:pmse_aux1} and the definition of $\tau_h$, it holds that 
\begin{align}
\label{eqn:pmse_aux2}
\sum_{j=1}^k \dbE_{(s_h, a_h, e_h) \sim d^{\s, \pi^j}_h} \left[\left(R_h(s_h, a_h, e_h) - R^*_h(s_h, a_h, e_h)\right)^2\right] = O\left(\beta_1 \tau_h\right).
\end{align}
By \Cref{eqn:pmse_aux2} and the definition of $C^{\ff}_h$ we have
\begin{align*}
\sum_{j=1}^k \dbE_{(s_h, a_h, e_h) \sim d^{\t, \pi^j}_h} \left[\left(R_h(s_h, a_h, e_h) - R^*_h(s_h, a_h, e_h)\right)^2\right] = O\left(\beta_1 \tau_h C^{\ff}_h\right).
\end{align*}

The proof is identical for the rest two terms.
\end{proof}

\begin{lemma}[Regret Decomposition]
\label{lem:regret_decompose}
Let $d^{\pi}_{\bar{\caM}^*, h}(s, a) \defeq \mathrm{Pr}^{\pi}_{\bar{\caM}^*}(s_h = s, a_h = a)$ be the occupancy measure of $\pi$ under the unknown aggregated model $\bar{\caM}^*$. Recall that the optimistic model $\bar{\caM}^k = \bar{\caM}(R^k, P^k)$ is defined in OPME (Algorithm \ref{alg:opme}). The regret can be decomposed as 
\begin{align*}
 & \mathrm{Reg}(\text{OPME-D}, K) = \sum_{k=1}^K \bar{V}^{\bar{\pi}^*}_{\bar{\caM}^*, 1}(s_1) - \bar{V}^{\pi^k}_{\bar{\caM}^*, 1}(s_1) \\
& \lesssim \sum_{k=1}^K \sum_{h=1}^H \dbE_{(s_h, a_h, e_h) \sim d^{\t, \pi^k}_{h}} \bigg[\left|R^k_h (s_h, a_h, e_h) - R^*_h (s_h, a_h, e_h)\right| \\ & \qquad \qquad + H\sum_{i=1}^{d_{\s}} \left|G^k_{h,i} (s_h, a_h, e_h) - G^*_{h,i} (s_h, a_h, e_h)\right|\bigg]
\end{align*}
for the dynamical system transition class, or it can be decomposed as 
\begin{align*}
& \mathrm{Reg}(\text{OPME-G}, K) = \sum_{k=1}^K \bar{V}^{\bar{\pi}^*}_{\bar{\caM}^*, 1}(s_1) - \bar{V}^{\pi^k}_{\bar{\caM}^*, 1}(s_1) \\
& \lesssim \sum_{k=1}^K \sum_{h=1}^H \dbE_{(s_h, a_h, e_h) \sim d^{\t, \pi^k}_{h}} \Big[\left|R^k_h (s_h, a_h, e_h) - R^*_h (s_h, a_h, e_h)\right|\Big] \\
& + \sum_{k=1}^K \sum_{h=1}^H \dbE_{(s_h, a_h, e_h) \sim d^{\t, \pi^k}_{h}} \Big[\left|P^k_h \bar{V}^{\pi^k}_{\bar{\caM}^k, h+1} (s_h, a_h, e_h) - P^*_h  \bar{V}^{\pi^k}_{\bar{\caM}^k, h+1} (s_h, a_h, e_h)\right|\Big]
\end{align*}
for the general transition class.
\end{lemma}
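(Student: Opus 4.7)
The plan is to combine the optimism we have just established with a standard simulation-lemma style value-difference identity, and then expand the aggregated reward/transition expectations so that every remaining error sits under the target occupancy $d^{\t,\pi^k}_h$.

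First I would use the consistency of the confidence set proved just above: with probability at least $1-\delta$, $\bar{\caM}^*\in\bar{\caC}^k$ for every $k\in[K]$. Since the algorithm picks $\bar{\caM}^{k+1}=\arg\max_{\bar{\caM}\in\bar{\caC}^k}V^{\pi^*_{\bar\caM}}_{\bar\caM,1}(s_1)$ and sets $\pi^{k+1}=\pi^*_{\bar{\caM}^{k+1}}$, the optimistic inequality $\bar V^{\bar\pi^*}_{\bar\caM^*,1}(s_1)\le V^{\pi^k}_{\bar\caM^k,1}(s_1)$ holds. Hence the per-episode regret is upper bounded by the simulation gap
\[
V^{\pi^k}_{\bar\caM^k,1}(s_1)-V^{\pi^k}_{\bar\caM^*,1}(s_1),
\]
and it suffices to decompose this gap.

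Next I would apply the value-difference identity (simulation lemma) to the two aggregated MDPs $\bar{\caM}^k=(\bar R^k,\bar P^k)$ and $\bar{\caM}^*=(\bar R^*,\bar P^*)$ under the common policy $\pi^k$:
\[
V^{\pi^k}_{\bar\caM^k,1}(s_1)-V^{\pi^k}_{\bar\caM^*,1}(s_1)=\sum_{h=1}^H\E_{(s_h,a_h)\sim d^{\t,\pi^k}_h}\!\Bigl[(\bar R^k_h-\bar R^*_h)(s_h,a_h)+(\bar P^k_h-\bar P^*_h)\bar V^{\pi^k}_{\bar\caM^k,h+1}(s_h,a_h)\Bigr].
\]
Here the roll-in measure is $d^{\t,\pi^k}_h$ because, by Section~\ref{sec:plan_smdp}, the reward and transition of $\caM^*(\caP^\t)$ match those of $\bar{\caM}^*$ exactly. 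I would then unfold the aggregation: by \eqref{equ:aggregate_reward}--\eqref{equ:aggregate_transition}, $\bar R^k_h-\bar R^*_h=\E_{t\sim\caP^\t_h,\,e\sim F_h(\cdot\mid s_h,a_h,t)}[R^k_h-R^*_h]$, and likewise for the transition. Pulling the inner conditional expectation out and applying Jensen's inequality absorbs $e_h$ into the expectation, converting the $(s_h,a_h)$-expectation into an $(s_h,a_h,e_h)$-expectation under $d^{\t,\pi^k}_h$ of $|R^k_h-R^*_h|$ (and the analogous transition term).

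Finally I would split on the transition class. For OPME-G, the term $(\bar P^k_h-\bar P^*_h)g$ with $g=\bar V^{\pi^k}_{\bar\caM^k,h+1}$ collapses to $|P^k_h g-P^*_h g|(s_h,a_h,e_h)$ after the same unfolding, producing exactly the second bound claimed. For OPME-D, the conditional laws $P^k_h(\cdot\mid s_h,a_h,e_h)$ and $P^*_h(\cdot\mid s_h,a_h,e_h)$ are coordinate-wise Gaussians with means $\bm{G}^k_h+\mathrm{corr}(t_h)$ and $\bm{G}^*_h+\mathrm{corr}(t_h)$ and identity covariance; the $\mathrm{corr}(t_h)$ shift cancels inside each coordinate's TV, so Lemma~\ref{lem:gaussian_tv_dist} yields $\mathrm{TV}\le\tfrac12\sum_{i=1}^{d_\s}|G^k_{h,i}-G^*_{h,i}|$ per $(s_h,a_h,e_h)$. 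Multiplying by $\|\bar V^{\pi^k}_{\bar\caM^k,h+1}\|_\infty\le H$ and summing over $h,k$ produces the $H\sum_i$ term in the first bound. The main obstacle I anticipate is book-keeping the roll-in measure: the value-difference identity is naturally stated in terms of the roll-in of $\pi^k$ on the \emph{evaluating} MDP $\bar\caM^*$, and I must verify that this coincides with $d^{\t,\pi^k}_h$ as defined in Section~\ref{sec:notation}, which rests on the equivalence $\caM^*(\caP^\t)\equiv\bar\caM^*$ and on the fact that sampling $e_h\sim F_h(\cdot\mid s_h,a_h,t)$ with $t\sim\caP^\t_h$ is exactly the joint law of $(s_h,a_h,e_h)$ built into $d^{\t,\pi^k}_h$.
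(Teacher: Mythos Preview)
Your proposal is correct and follows essentially the same route as the paper: optimism via the consistent confidence set, the simulation-lemma value-difference identity on the aggregated models, unfolding the aggregation with Jensen to move from $(s_h,a_h)$ to $(s_h,a_h,e_h)$ under $d^{\t,\pi^k}_h$, and (for OPME-D) the Gaussian TV bound of Lemma~\ref{lem:gaussian_tv_dist} coordinate-wise with the $\mathrm{corr}(t_h)$ shift cancelling. The only cosmetic point is that the Gaussian step requires conditioning on $t_h$ as well (i.e.\ working with $P_h(\cdot\mid s_h,a_h,e_h,t_h)$ and keeping the outer $t_h$-expectation), which you implicitly use when invoking the $\mathrm{corr}(t_h)$ cancellation; the paper makes this conditioning explicit.
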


\begin{proof}
The regret can be decomposed as 
\begin{align*}
& \mathrm{Reg}(\text{OPME-D}, K) = \sum_{k=1}^K \bar{V}^{\bar{\pi}^*}_{\bar{\caM}^*, 1}(s_1) - \bar{V}^{\pi^k}_{\bar{\caM}^*, 1}(s_1) \leq \sum_{k=1}^K \bar{V}^{\pi^k}_{\bar{\caM}^k, 1}(s_1) - \bar{V}^{\pi^k}_{\bar{\caM}^*, 1}(s_1) \\
& \leq \sum_{k=1}^K \sum_{h=1}^H \dbE_{(s_h, a_h) \sim d^{\pi^k}_{\bar{\caM}^*, h}} \Big[\left|\bar{R}^k_h(s_h, a_h) - \bar{R}^*_h(s_h, a_h)\right| + \left|\bar{P}^k_h \bar{V}^{\pi^k}_{\bar{\caM}^k, h+1} (s_h, a_h) - \bar{P}^*_h \bar{V}^{\pi^k}_{\bar{\caM}^k, h+1} (s_h, a_h)\right|\Big] \\
& \leq \sum_{k=1}^K \sum_{h=1}^H \dbE_{(s_h, a_h) \sim d^{\pi^k}_{\bar{\caM}^*, h}} \Big[\left|\bar{R}^k_h(s_h, a_h) - \bar{R}^*_h(s_h, a_h)\right|\Big] \\
\qquad & + H \sum_{k=1}^K \sum_{h=1}^H \dbE_{(s_h, a_h) \sim d^{\pi^k}_{\bar{\caM}^*, h}} \Big[\left\|\bar{P}^k_h (\cdot \mid s_h, a_h) - \bar{P}^*_h (\cdot \mid s_h, a_h)\right\|_1\Big].
\end{align*}
For the transition term, we can bound it by 
\begin{align*}
& \sum_{k=1}^K \sum_{h=1}^H \dbE_{(s_h, a_h) \sim d^{\pi^k}_{\bar{\caM}^*, h}} \Big[\left\|\bar{P}^k_h (\cdot \mid s_h, a_h) - \bar{P}^*_h (\cdot \mid s_h, a_h)\right\|_1\Big] \\
& = \sum_{k=1}^K \sum_{h=1}^H \dbE_{(s_h, a_h) \sim d^{\pi^k}_{\bar{\caM}^*, h}} \Big[\left\|\dbE_{t_h \sim \caP^{\t}_h, e_h \sim F_h(\cdot \mid s_h, a_h, t_h)} \left[P^k_h (\cdot \mid s_h, a_h, e_h, t_h) - P^*_h (\cdot \mid s_h, a_h, e_h, t_h)\right]\right\|_1\Big] \\
& \leq \sum_{k=1}^K \sum_{h=1}^H \dbE_{(s_h, a_h) \sim d^{\pi^k}_{\bar{\caM}^*, h}, t_h \sim \caP^{\t}_h, e_h \sim F_h(\cdot \mid s_h, a_h, t_h)} \Big[\left\|P^k_h (\cdot \mid s_h, a_h, e_h, t_h) - P^*_h (\cdot \mid s_h, a_h, e_h, t_h)\right\|_1\Big].
\end{align*}
The first inequality uses Jensen's inequality. Let $F_h(\cdot \mid s_h, a_h, \caP^{\t}_h)$ be the distribution of $e_h$ under $s_h, a_h$ and $t_h \sim \caP^{\t}_h$. For the dynamical system transition class, we know $P_h (\cdot \mid s_h, a_h, e_h, t_h)$ is a $d_{\s}$-dimensional Gaussian distribution with mean $\bm{G}_h(s_h, a_h, e_h)$ and covariance $\bm{I}_{d_{\s}}$ (i.e., $d_{\s}$ independent Gaussian random variables) by \Cref{eqn:corr_decompose_dynamical}. Therefore, we have 
\begin{align*}
& \sum_{k=1}^K \sum_{h=1}^H \dbE_{(s_h, a_h) \sim d^{\pi^k}_{\bar{\caM}^*, h}, t_h \sim \caP^{\t}_h, e_h \sim F_h(\cdot \mid s_h, a_h, t_h)} \Big[\left\|P^k_h (\cdot \mid s_h, a_h, e_h, t_h) - P^*_h (\cdot \mid s_h, a_h, e_h, t_h)\right\|_1\Big] \\
& \lesssim \sum_{k=1}^K \sum_{h=1}^H \dbE_{(s_h, a_h) \sim d^{\pi^k}_{\bar{\caM}^*, h}, e_h \sim F_h(\cdot \mid s_h, a_h, \caP^{\t}_h)} \Big[\left\|\bm{G}^k_h (s_h, a_h, e_h) - \bm{G}^*_h (s_h, a_h, e_h)\right\|_1\Big],
\end{align*}
by Lemma \ref{lem:gaussian_tv_dist}.

This term can be further bounded by 
\begin{align*}
& \sum_{k=1}^K \sum_{h=1}^H \dbE_{(s_h, a_h) \sim d^{\pi^k}_{\bar{\caM}^*, h}, e_h \sim F_h(\cdot \mid s_h, a_h, \caP^{\t}_h)} \Big[\left\|\bm{G}^k_h (s_h, a_h, e_h) - \bm{G}^*_h (s_h, a_h, e_h)\right\|_1\Big] \\
& = \sum_{k=1}^K \sum_{h=1}^H \dbE_{(s_h, a_h, e_h) \sim d^{\t, \pi^k}_{h}} \Big[\left\|\bm{G}^k_h (s_h, a_h, e_h) - \bm{G}^*_h (s_h, a_h, e_h)\right\|_1\Big] \\
& = \sum_{k=1}^K \sum_{h=1}^H \sum_{i=1}^{d_{\s}} \dbE_{(s_h, a_h, e_h) \sim d^{\t, \pi^k}_{h}} \Big[\left|G^k_{h,i} (s_h, a_h, e_h) - G^*_{h,i} (s_h, a_h, e_h)\right|\Big]
\end{align*}
where the first equality is by the definition of $d^{\t, \pi^k}_{h}$ (see \Cref{sec:notation}) and the aggregated model $\bar{\caM}^*$. 

Similarly, the reward term can be bounded by 
\begin{align*}
& \sum_{k=1}^K \sum_{h=1}^H \dbE_{(s_h, a_h) \sim d^{\pi^k}_{\bar{\caM}^*, h}} \Big[\left|\bar{R}^k_h(s_h, a_h) - \bar{R}^*_h(s_h, a_h)\right|\Big] \\
& \leq \sum_{k=1}^K \sum_{h=1}^H \dbE_{(s_h, a_h) \sim d^{\pi^k}_{\bar{\caM}^*, h}, e_h \sim F_h(\cdot \mid s_h, a_h, \caP^{\t}_h)} \Big[\left|R^k_h (s_h, a_h, e_h) - R^*_h (s_h, a_h, e_h)\right|\Big] \\
& = \sum_{k=1}^K \sum_{h=1}^H \dbE_{(s_h, a_h, e_h) \sim d^{\t, \pi^k}_{h}} \Big[\left|R^k_h (s_h, a_h, e_h) - R^*_h (s_h, a_h, e_h)\right|\Big].
\end{align*}

For the general transition class, we decompose the regret as 
\begin{align*}
& \mathrm{Reg}(\text{OPME-G}, K) = \sum_{k=1}^K \bar{V}^{\bar{\pi}^*}_{\bar{\caM}^*, 1}(s_1) - \bar{V}^{\pi^k}_{\bar{\caM}^*, 1}(s_1) \leq \sum_{k=1}^K \bar{V}^{\pi^k}_{\bar{\caM}^k, 1}(s_1) - \bar{V}^{\pi^k}_{\bar{\caM}^*, 1}(s_1) \\
& \leq \sum_{k=1}^K \sum_{h=1}^H \dbE_{(s_h, a_h) \sim d^{\pi^k}_{\bar{\caM}^*, h}} \Big[\left|\bar{R}^k_h(s_h, a_h) - \bar{R}^*_h(s_h, a_h)\right| + \left|\bar{P}^k_h \bar{V}^{\pi^k}_{\bar{\caM}^k, h+1} (s_h, a_h) - \bar{P}^*_h \bar{V}^{\pi^k}_{\bar{\caM}^k, h+1} (s_h, a_h)\right|\Big].
\end{align*}

The transition term now can be bounded by
\begin{align*}
& \sum_{k=1}^K \sum_{h=1}^H \dbE_{(s_h, a_h) \sim d^{\pi^k}_{\bar{\caM}^*, h}} \Big[\left|\bar{P}^k_h \bar{V}^{\pi^k}_{\bar{\caM}^k, h+1} (s_h, a_h) - \bar{P}^*_h \bar{V}^{\pi^k}_{\bar{\caM}^k, h+1} (s_h, a_h)\right|\Big] \\
& = \sum_{k=1}^K \sum_{h=1}^H \dbE_{(s_h, a_h) \sim d^{\pi^k}_{\bar{\caM}^*, h}} \Big[\left|\dbE_{e_h \sim F_h(\cdot \mid s_h, a_h, \caP^{\t}_h)} \left[P^k_h \bar{V}^{\pi^k}_{\bar{\caM}^k, h+1} (s_h, a_h, e_h) - P^*_h  \bar{V}^{\pi^k}_{\bar{\caM}^k, h+1} (s_h, a_h, e_h)\right]\right|\Big] \\
& \leq \sum_{k=1}^K \sum_{h=1}^H \dbE_{(s_h, a_h, e_h) \sim d^{\t, \pi^k}_{h}} \Big[\left|P^k_h \bar{V}^{\pi^k}_{\bar{\caM}^k, h+1} (s_h, a_h, e_h) - P^*_h  \bar{V}^{\pi^k}_{\bar{\caM}^k, h+1} (s_h, a_h, e_h)\right|\Big].
\end{align*}
The last inequality is by Jensen's inequality.
\end{proof}

Now we come to bound the regret of OPME.

\begin{theorem}
\label{thm:regret_smdp}
Under Assumption \ref{assump:realizability}, with probability at least $1 - \delta$ the regret of OPME is bounded by 
\begin{align*}
\mathrm{Reg}(\text{OPME-D}, K) = \tilde{O}\left(\sum_{h=1}^H HB\sqrt{d_{\mathrm{M}, h} \tau_h C^{\ff}_h \log (|\caR||\caP||\caF|/\delta) K}\right)
\end{align*}
for OPME-D, and 
\begin{align*}
\mathrm{Reg}(\text{OPME-G}, K) = \tilde{O}\left(\sum_{h=1}^H B\sqrt{d_{\mathrm{V}, h} \tau_h C^{\ff}_h \log (|\caR||\caP||\caG||\caF|/\delta) K}\right)
\end{align*}
for OPME-G.
\end{theorem}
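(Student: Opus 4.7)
The plan is to combine the regret decomposition in \Cref{lem:regret_decompose}, the in-sample error bounds from \Cref{lem:error_rate_mse}, and the distributional Eluder-dimension argument (\Cref{lem:eluder_argument}) to control each per-step error term. The key conceptual step is that although \Cref{lem:error_rate_mse} bounds cumulative \emph{second moments} of the model error along the realized exploration trajectory, \Cref{lem:eluder_argument} operates on cumulative \emph{first moments}; a single application of Jensen's inequality bridges these two viewpoints.

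First, I would focus on a single term in the decomposition, e.g.\ the reward term $\sum_{k=1}^{K}\mathbb{E}_{d^{\t,\pi^{k}}_{h}}\!\left[\left|R^{k}_{h}-R^{*}_{h}\right|\right]$. Define $\phi_{k}\defeq R^{k}_{h}-R^{*}_{h}\in\caR_{h}-R^{*}_{h}$ and $\mu_{k}\defeq d^{\t,\pi^{k}}_{h}\in\Pi_{h}$. By consistency of the confidence set, $R^{k}_{h}\in\caR^{k-1}_{h}$, so \Cref{lem:error_rate_mse} applied at the previous episode gives
\[
\sum_{j=1}^{k-1}\mathbb{E}_{d^{\t,\pi^{j}}_{h}}\!\left[(R^{k}_{h}-R^{*}_{h})^{2}\right]\lesssim \beta_{1}\,\tau_{h}\,C^{\ff}_{h},
\]
and Jensen's inequality upgrades this to $\sum_{j=1}^{k-1}(\mathbb{E}_{\mu_{j}}\phi_{k})^{2}\lesssim \beta_{1}\tau_{h}C^{\ff}_{h}$. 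Now \Cref{lem:eluder_argument} with $\omega=1/\sqrt{K}$ and $C\lesssim B$ yields
\[
\sum_{k=1}^{K}|\mathbb{E}_{\mu_{k}}\phi_{k}|\lesssim \sqrt{\dim_{\mathrm{DE}}(\caR_{h}-R^{*}_{h},\Pi_{h},1/\sqrt{K})\,\beta_{1}\,\tau_{h}\,C^{\ff}_{h}\,K}+\text{lower order}.
\]

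Next, I would treat the transition terms analogously. For OPME-G, set $\phi_{k}\defeq P^{k}_{h}g^{k}_{h+1}-P^{*}_{h}g^{k}_{h+1}$, where $g^{k}_{h+1}\defeq\bar{V}^{\pi^{k}}_{\bar{\caM}^{k},h+1}\in\caG_{h+1}$ by the realizability part of \Cref{assump:realizability}. The same Jensen-plus-Eluder pipeline, together with the second bound of \Cref{lem:error_rate_mse}, produces a term of order $\sqrt{\dim_{\mathrm{DE}}((\caP_{h}-P^{*}_{h})\caG_{h+1},\Pi_{h},1/\sqrt{K})\,\beta_{2}\,\tau_{h}\,C^{\ff}_{h}\,K}$. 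Summing the reward and transition parts over $h$ and using the definition of $d_{\mathrm{V},h}$ gives the OPME-G bound after plugging in $\beta_{1},\beta_{2}=\tilde O(B^{2})$. For OPME-D, the decomposition in \Cref{lem:regret_decompose} already has an explicit factor $H$ (from the TV-to-$\ell_{1}$ conversion through \Cref{lem:gaussian_tv_dist}) and a sum over the $d_{\s}$ coordinates $G^{k}_{h,i}-G^{*}_{h,i}$; applying the Eluder argument coordinate-wise and summing over $i\in[d_{\s}]$ produces the $H\sqrt{d_{\mathrm{M},h}\cdots}$ form stated in the theorem.

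The main obstacle I anticipate is the indexing subtlety: the cumulative bound from \Cref{lem:error_rate_mse} is stated for elements of $\caR^{k}_{h}$/$\caP^{k}_{h}$, whereas \Cref{lem:eluder_argument} requires the bound $\sum_{j<k}(\mathbb{E}_{\mu_{j}}\phi_{k})^{2}\le\beta$ to hold with the \emph{same} function $\phi_{k}$ across the first $k-1$ rollouts. Care must therefore be taken to invoke \Cref{lem:error_rate_mse} at index $k-1$ (so that the optimistic choice $R^{k}$, $P^{k}$ from episode $k$ still lies in the confidence set), and to verify that the $\caG$-dependence in the transition case does not spoil the Eluder argument; this is where the choice to bake the value-function class $\caG$ into the Eluder dimension $d_{\mathrm{V},h}$ pays off, since the test functions $P_{h}g_{h+1}-P^{*}_{h}g_{h+1}$ then live in a single function class indexed by $(P_{h},g_{h+1})$. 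After these regret bounds are established, the sample-complexity claim of \Cref{thm:sc_smdp} follows by the standard online-to-batch conversion: since the output policy is drawn uniformly from $\{\pi^{i}\}_{i=1}^{K}$, its expected sub-optimality is $\mathrm{Reg}/K$, and setting this $\le\epsilon$ yields $K=\tilde O(\epsilon^{-2})$ with the claimed dependence on $B^{2},d_{\mathrm{V},h},\tau_{h},C^{\ff}_{h}$ and $\log(|\caR\times\caP\times\caG\times\caF|/\delta)$.
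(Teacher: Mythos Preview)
Your proposal is correct and follows essentially the same route as the paper's proof: apply the regret decomposition of \Cref{lem:regret_decompose}, then for each per-step term use that $R^{k}_{h}\in\caR^{k-1}_{h}$ (resp.\ $P^{k}_{h}\in\caP^{k-1}_{h}$, $G^{k}_{h,i}\in\caP^{k-1}_{h,i}$) to invoke \Cref{lem:error_rate_mse} at index $k-1$, pass to $(\dbE_{\mu_{j}}\phi_{k})^{2}\le\dbE_{\mu_{j}}[\phi_{k}^{2}]$ via Jensen, and feed the resulting precondition into \Cref{lem:eluder_argument} with $\omega=1/\sqrt{K}$ and $C=B$. You are in fact slightly more explicit than the paper about the Jensen bridge and about checking $\bar V^{\pi^{k}}_{\bar\caM^{k},h+1}\in\caG_{h+1}$ via \Cref{assump:realizability}; otherwise the arguments coincide.
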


\begin{proof}
Fix $h \in [H]$. Since $R^k_h \in \caR^{k-1}_h$ for any $k \in [K]$, we use Lemma \ref{lem:eluder_argument} with 
\begin{align*}
    C = B, \omega = \frac{1}{\sqrt{K}}, \Phi = \caR_h - R^*_h, \phi_k = R^k_h - R^*_h, \mu_k = d^{\t, \pi^k}_h
\end{align*}
to obtain 
\begin{align*}
&\sum_{k=1}^K \dbE_{(s_h, a_h, e_h) \sim d^{\t, \pi^k}_{h}} \left[\left|R^k_h (s_h, a_h, e_h) - R^*_h (s_h, a_h, e_h)\right|\right] \\ 
&\leq O\left(\sqrt{\dim_{\mathrm{DE}}(\caR_h - R^*_h, \Pi_h, 1/\sqrt{K}) \beta_1 \tau_h C^{\ff}_h K)}\right)
\end{align*}
by Lemma \ref{lem:error_rate_mse}.

Similarly, for any $i \in [d_{\s}]$, we have
\begin{align*}
&\sum_{k=1}^K \dbE_{(s_h, a_h, e_h) \sim d^{\t, \pi^k}_{h}} \left[\left|G^k_{h,i} (s_h, a_h, e_h) - G^*_{h,i} (s_h, a_h, e_h)\right|\right] \\ 
&\leq O\left(\sqrt{\dim_{\mathrm{DE}}(\caP_{h,i} - P^*_{h,i}, \Pi_h, 1/\sqrt{K}) \beta_3 \tau_h C^{\ff}_h K)}\right).
\end{align*}

By Lemma \ref{lem:regret_decompose}, with probability at least $1 - \delta$, we can bound the regret of OPME-D as 
\begin{align*}
\mathrm{Reg}\left(\text{OPME-D}, K\right) & \leq 
H \sum_{k=1}^K \sum_{h=1}^H \sum_{i=1}^{d_{\s}} \dbE_{(s_h, a_h, e_h) \sim d^{\t, \pi^k}_{h}} \left[\left|G^k_{h,i} (s_h, a_h, e_h) - G^*_{h,i} (s_h, a_h, e_h)\right|\right] \\
& \quad + \sum_{k=1}^K \sum_{h=1}^H \dbE_{(s_h, a_h, e_h) \sim d^{\t, \pi^k}_{h}} \left[\left|R^k_h (s_h, a_h, e_h) - R^*_h (s_h, a_h, e_h)\right|\right] \\
& \leq \tilde{O}\left(\sum_{h=1}^H HB\sqrt{d_{\mathrm{M}, h} \tau_h C^{\ff}_h \log (|\caR||\caP||\caF|/\delta) K}\right).
\end{align*}

Similarly, we invoke the Lemma \ref{lem:eluder_argument} with 
\begin{align*}
    C = B, \omega = \frac{1}{\sqrt{K}}, \Phi = \caP_h \caG_{h+1} - P^*_h \caG_{h+1}, \phi_k = P^k_h \bar{V}^{\pi^k}_{\bar{\caM}^k, h+1} - P^*_h \bar{V}^{\pi^k}_{\bar{\caM}^k, h+1}, \mu_k = d^{\t, \pi^k}_h
\end{align*}
to obtain 
\begin{align*}
& \sum_{k=1}^K \sum_{h=1}^H \dbE_{(s_h, a_h, e_h) \sim d^{\t, \pi^k}_{h}} \left[\left|P^k_h \bar{V}^{\pi^k}_{\bar{\caM}^k, h+1} (s_h, a_h, e_h) - P^*_h  \bar{V}^{\pi^k}_{\bar{\caM}^k, h+1} (s_h, a_h, e_h)\right|\right] \\
& \leq O\left(\sqrt{\dim_{\mathrm{DE}}(\caP_h \caG_{h+1} - P^*_{h} \caG_{h+1}, \Pi_h, 1/\sqrt{K}) \beta_2 \tau_h C^{\ff}_h K)}\right).
\end{align*}

The regret of OPME-G is bounded as 
\begin{align*}
\mathrm{Reg}\left(\text{OPME-G}, K\right) & \leq \dbE_{(s_h, a_h, e_h) \sim d^{\t, \pi^k}_{h}} \left[\left|P^k_h \bar{V}^{\pi^k}_{\bar{\caM}^k, h+1} (s_h, a_h, e_h) - P^*_h  \bar{V}^{\pi^k}_{\bar{\caM}^k, h+1} (s_h, a_h, e_h)\right|\right] \\
& \quad + \sum_{k=1}^K \sum_{h=1}^H \dbE_{(s_h, a_h, e_h) \sim d^{\t, \pi^k}_{h}} \left[\left|R^k_h (s_h, a_h, e_h) - R^*_h (s_h, a_h, e_h)\right|\right] \\
& \leq \tilde{O}\left(\sum_{h=1}^H B\sqrt{d_{\mathrm{V}, h} \tau_h C^{\ff}_h \log (|\caR||\caP||\caG||\caF|/\delta) K}\right).
\end{align*}

\end{proof}

\begin{proof}[Proof of Theorem \ref{thm:sc_smdp}]
The sample complexity of OPME (Theorem \ref{thm:sc_smdp}) can be obtained by standard online-to-batch conversion \citep{jin2018q} from Theorem \ref{thm:regret_smdp}. That, the suboptimality of a uniform policy from $\{\pi_1, \pi_2, ..., \pi_K\}$ for a given number of episodes $K$ is at most 
\begin{align*}
\tilde{O}\left(\frac{\sum_{h=1}^H HB\sqrt{d_{\mathrm{M}, h} \tau_h C^{\ff}_h \log (|\caR||\caP||\caF|/\delta)}}{\sqrt{K}}\right)
\end{align*}
for OPME-D, and
\begin{align*}
\tilde{O}\left(\frac{\sum_{h=1}^H B\sqrt{d_{\mathrm{V}, h} \tau_h C^{\ff}_h \log (|\caR||\caP||\caG||\caF|/\delta)}}{\sqrt{K}}\right)
\end{align*}
for OPME-G by Theorem \ref{thm:regret_smdp}. To bound this suboptimality term to be at most $\epsilon$, we can prove the theorem by setting $K$ according to the theorem.    
\end{proof}

\section{The Necessity of the Ill-posed Measure and Knowledge Transfer Multiplicative Term}
\label{appendix:necessity_illposed_transerror}
Recall that the sample complexity of the model-based algorithm OPME-G has a linear dependency on the ill-posed measure $\tau_h$, the knowledge transfer multiplicative term $C^{\ff}_h$, and the distributional Eluder dimension $d_{\mathrm{V}, h}$. Generally speaking, they are all necessary terms in our bound in order to find near-optimal policies. Now, we discuss the necessity of these terms. 

\begin{itemize}
    \item The distributional Eluder dimension is standard in the literature of RL with general function approximation \citep{jin2021bellman}. Without bounded distributional Eluder dimension, the sample complexity can scale linearly with the state space and action space in the worst case.
    \item The ill-posedness measure is also standard in the ill-posed inverse problems. This term is necessary as long as we hope to identify the underlying reward function $R^*$ and transition function $P^*$. Taking the reward function as an example, the mean square error of identification is lower bounded by the inverse of eigenvalues of an operator $K$, where $K$ is defined on $\mathcal{R}_h - R^*_h$ and $(K \nu_h)(s_h, a_h) := \mathbb{E}_{e_h}[\nu_h(s_h, a_h, e_h)] $, as pointed out by \citet{chen2011rate, hall2005nonparametric}. In other words, $K$ is a linear mapping from the function space of $(s_h, a_h, e_h)$ (endogenous variables) to the function space of $(s_h, a_h)$ (instrumental variables). Suppose the eigenvalues of $K$ are $\lambda_1 \geq \lambda_2 \geq \cdots \geq 0$ and $\lambda_k \geq k^{-\alpha}$, then the lower bound of the MSE scales as $\Omega(n^{-\beta / (\beta + \alpha)})$ ($n$ is the number of samples) for $\alpha, \beta \geq 0$. Here $\alpha$ controls the magnitude of the ill-posedness measure defined in the paper, and if $\alpha \to \infty$ the estimation error will diverge even if $n$ is very large.
    \item The third term is the distributional shift term. Our problem is similar to the covariate shift setting in the unsupervised domain adaptation. We can regard the model as using $s_h, a_h, e_h$ to predict $r_h, s_{h+1}$ with underlying functions $R^*_h, P^*_h$. The source distribution is $d^{\pi,s}_h$ and the target distribution is $d^{\pi,t}_h$ for an arbitrary policy $\pi$. Therefore, a well-known conditions for domain adaptation to succeed in this setting~\citep{ben2012hardness, ben2014domain} is the so-called weight ratio being lower bounded, which is equal to $\min_{X} d^{\pi,s}_h(X) / d^{\pi,t}_h(X)$ and $X$ is any measurable subset of the input space. That is exactly equal to the distributional shift term $C^f_h$ defined in the paper up to constant multipliers. Intuitively, there will be no information being transferred to the target in the worst case without such conditions. 
\end{itemize}

\end{document}